\documentclass[conference]{IEEEtran}
\IEEEoverridecommandlockouts
\usepackage{cite}
\usepackage{amsmath,amssymb,amsfonts}
\usepackage{graphicx}
\usepackage{textcomp}
\usepackage{xcolor}
\usepackage[
            colorlinks,
            linkcolor=red, 
            anchorcolor=green,
            citecolor=blue,
            ]{hyperref}
\def\BibTeX{{\rm B\kern-.05em{\sc i\kern-.025em b}\kern-.08em
    T\kern-.1667em\lower.7ex\hbox{E}\kern-.125emX}}

\definecolor{lightblue}{HTML}{D1EDFF}
\definecolor{red_one}{rgb}{0.8392156862745098, 0.15294117647058825, 0.1568627450980392}
\definecolor{blue_one}{rgb}{0.12156862745098039, 0.4666666666666667, 0.7058823529411765}
\usepackage[font=small,labelfont=bf,tableposition=top]{caption}
\DeclareCaptionLabelFormat{andtable}{#1~#2  \&  \tablename~\thetable}
\definecolor{-}{rgb}{0.25,0.41,0.88}
\definecolor{+}{rgb}{0.70,0.13,0.13}

\newcommand{\Review}[2]{\textbf{Comment #1: } \textit{\textcolor{blue!50!black}{#2}}}
\newcommand{\Response}[2]{\textbf{Response #1: }#2 \vspace{0.5em}}

\usepackage{amssymb,amsmath,amsthm}
\usepackage{graphicx}

\usepackage{multirow}
\usepackage{mathtools}
\usepackage{colortbl}

\usepackage{algorithm}
\usepackage{listings}
\usepackage{algpseudocode} 
\usepackage{enumitem}
\newtheorem{corollary}{Corollary}[section]
\usepackage{booktabs}
\usepackage{natbib}

\usepackage{tikz}
\usepackage{pgfplots}
\usepackage{verbatim}
\newcommand{\ie}{\emph{i.e., }}
\newcommand{\eg}{\emph{e.g., }}

\newcommand{\wrt}{\emph{w.r.t. }}
\newcommand{\cf}{\emph{cf. }}
\newcommand{\aka}{\emph{aka. }}

\setcitestyle{numbers,square}
\begin{document}
\title{BSL: Understanding and Improving Softmax Loss for Recommendation
}
\author{
  \IEEEauthorblockN{
    Junkang Wu\textsuperscript{1},
    Jiawei Chen\textsuperscript{2}\IEEEauthorrefmark{1},
    Jiancan Wu\textsuperscript{1}\IEEEauthorrefmark{1},
    Wentao Shi\textsuperscript{1},
    Jizhi Zhang\textsuperscript{1},
    Xiang Wang\textsuperscript{1}\IEEEauthorrefmark{2}
  }
  \IEEEauthorblockA{\textsuperscript{1} \itshape MoE Key Laboratory of Brain-inspired Intelligent Perception and Cognition, \\
  University of Science and Technology of China, Hefei, China}
  \IEEEauthorblockA{\textsuperscript{2} \itshape The State Key Laboratory of Blockchain and Data Security, Zhejiang Unversity}
  \IEEEauthorblockA{ \ttfamily
    \{jkwu0909, wujcan, xiangwang1223\}@gmail.com,
    sleepyhunt@zju.edu.cn,\\
    \{shiwentao123, cdzhangjizhi\}@mail.ustc.edu.cn
  }
  \thanks{\IEEEauthorrefmark{1}Jiawei Chen and Jiancan Wu are the corresponding authors.}
  \thanks{\IEEEauthorrefmark{2}Xiang Wang is also affiliated with Institute of Artificial Intelligence, Institute of Dataspace, Hefei Comprehensive National Science Center.}
}

\maketitle

\begin{abstract}
Loss functions steer the optimization direction of recommendation models and are critical to model performance, but have received relatively little attention in recent recommendation research. Among various losses, we find Softmax loss (SL) stands out for not only achieving remarkable accuracy but also better robustness and fairness. Nevertheless, the current literature lacks a comprehensive explanation for the efficacy of SL.

Toward addressing this research gap, we conduct theoretical analyses on SL and uncover three insights: 1) Optimizing SL is equivalent to performing Distributionally Robust Optimization (DRO) on the negative data, thereby learning against perturbations on the negative distribution and yielding robustness to noisy negatives. 2) Comparing with other loss functions, SL implicitly penalizes the prediction variance, resulting in a smaller gap between predicted values and and thus producing fairer results. 

Building on these insights, we further propose a novel loss function Bilateral SoftMax Loss (BSL) that extends the advantage of SL to both positive and negative sides. BSL augments SL by applying the same Log-Expectation-Exp structure to positive examples as is used for negatives, making the model robust to the noisy positives as well. Remarkably, BSL is simple and easy-to-implement --- requiring just one additional line of code compared to SL. Experiments on four real-world datasets and three representative backbones demonstrate the effectiveness of our proposal. The code is available at \href{https://github.com/junkangwu/BSL}{https://github.com/junkangwu/BSL}. 
\end{abstract}

\begin{IEEEkeywords}
  Recommendation System, Robustness, Distributionally Robust Optimization
\end{IEEEkeywords}

\section{Introduction}
Recommender Systems (RS) have become a leading technology addressing information overload.
Owing to the easy availability of
user behaviors (\eg click and purchase), collaborative filtering (CF) with implicit feedback emerges as an effective solution. Recent years have witnessed the flourishing publications on designing model architecture, spanning matrix factorization \cite{pan2008one},
autoencoders\cite{liang2018variational, ma2019learning, steck2019embarrassingly}, and advanced graph neural networks \cite{he2020lightgcn, ying2018graph,wang2019neural}. In contrast, relatively less work focuses on loss functions, which play critical roles in steering CF.

Existing loss functions for CF with implicit feedback can be mainly categorized into three types: 1) Pointwise loss (\eg BCE \cite{he2017neural_a}, MSE \cite{he2017neural}) casts the problem into a classification or regression, promoting model predictions close to the labels; 2) Pairwise loss (\eg BPR \cite{rendle2012bpr}) encourages a higher score for positive item compared to its negative counterpart;
3) Softmax loss (SL) \cite{wu2022effectiveness, bengio2003quick} normalizes model predictions over items into a multinomial distribution, optimizing the probability of positive instances versus negative ones. 

    \begin{figure}[t]\centering
    \includegraphics[width=1.0\linewidth]{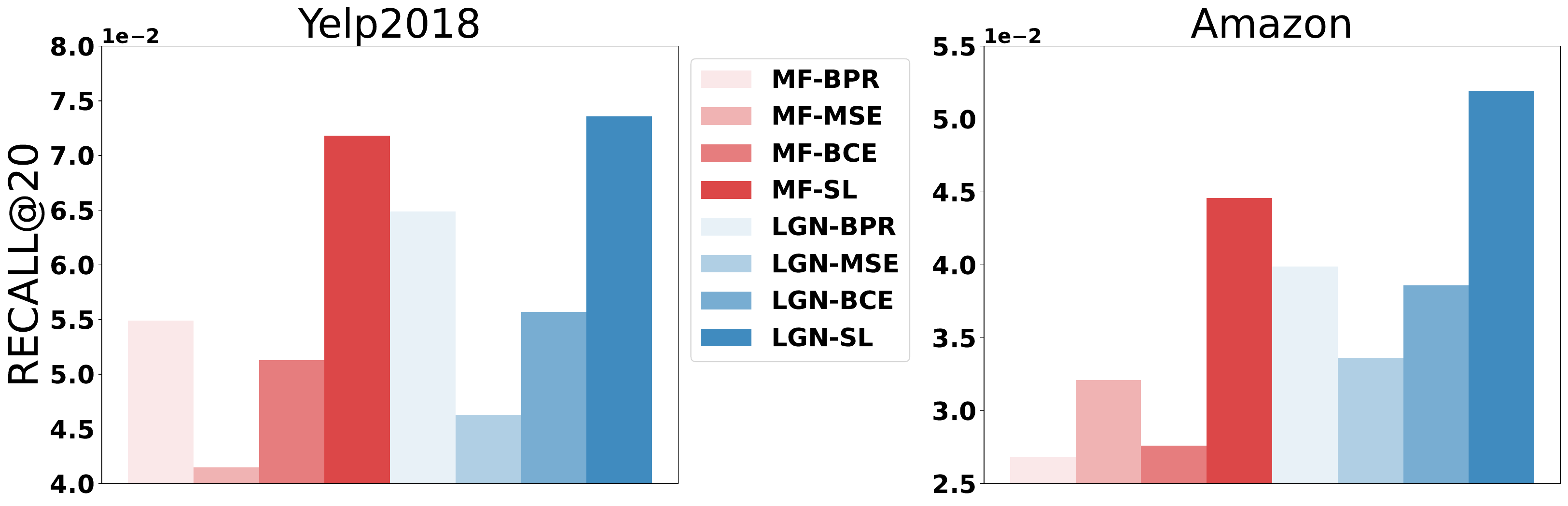}
    \caption{ 
    Performance comparison of different loss functions based on MF (\textcolor{red_one}{red bars}) and LightGCN (\textcolor{blue_one}{blue bars}) on the Yelp2018 and Amazon datasets. SL consistently achieves superior performance over other losses by a significant margin ($>15\%$ gain) across models and datasets. Additional results on other datasets and architectures (presented in Table I) exhibit similar patterns, highlighting the effectiveness of SL for collaborative filtering.}
    \label{fig::t_changes_performance_yelp2}
  \end{figure}

Through empirical evaluation of several losses paired with two representative recommenders (\ie- MF\cite{MF} and LightGCN\cite{he2020lightgcn}) on real-world datasets, we find SL consistently and significantly outperforms other losses, with over 15\% higher accuracy, as is shown in Figure \ref{fig::t_changes_performance_yelp2}.
Beyond accuracy, we find SL also exhibits better performance in terms of robustness and fairness (\cf, Figure \ref{fig::t_changes_performance}).
This impressive result reveals the superiority of SL, which motivates us to explore a fundamental question --- \textbf{what are the underlying reasons behind SL's effectiveness?} 
Prior work \cite{covington2016deep, bruch2019analysis, wu2022effectiveness} has explored the properties of SL in recommendations but with several limitations.
  1) The connections between SL and other losses, as well as how SL improves upon them, remain unrevealed;  2) The fairness of SL was attributed to the popularity-based negative sampling strategy, rather than the loss itself, whereas we find uniform sampling also yields fairness.


In this work, we strive to conduct in-depth analysis of SL through the lens of \textit{Distributionally Robust Optimization} (DRO) \cite{ben2013robust, duchi2016statistics, hu2013kullback}. DRO is an advanced stochastic optimization framework that learns models robust to perturbations in the data distribution. Remarkably, our rigorous theoretical analyses reveal the following key insights:

\begin{itemize}[leftmargin=*]
\item \textbf{Optimizing SL is equivalent to performing DRO on the vanilla pointwise loss}, where certain perturbations are added to the negative item distribution.
{In fact, noisy (\ie false) negative instances are common in RS due to factors, such as user unawareness, low ranking position or popularity of items~\cite{zhuang2021cross,canamares2018should}.
Such noisy negative data inevitably causes distribution shift between sampled data and the true one, misleading learning procedure and deteriorating performance.
In contrast, the DRO nature of SL imparts robustness to negative noise, and hence yielding better performance.}
\item \textbf{SL introduces an implicit regularizer that controls the prediction variance on negative instances.} 
Due to the long tail distribution exhibited in recommendation data,
models tend to overly favor popular items while neglecting others, resulting in notorious popularity bias. By penalizing the variance of model predictions, SL would reduce the prediction discrepancy between popular and unpopular items, leading to fairer recommendation results.

 \end{itemize}

The above analyses not only explain the strengths of SL, but inspire us an enhanced loss design. 
Intuitively, robustness to only negative noise is insufficient, as positive instances also contain unreliability due to the factors like clickbait \cite{Jagerman2019To} or user conformity \cite{zhang2021causal}.
In light of this, we propose a novel loss function named \textit{\textbf{B}ilateral \textbf{S}oftmax \textbf{L}oss} (BSL), which extends the advantage of SL to both positive and negative sides. Specifically,
we augment the positive loss of SL to a Log-Expectation-Exp structure mirroring the negative part.
This bilateral architecture provides robustness to noise on both positives and negatives.
Remarkably, our BSL is easy to implement, requiring only one line of code amended as compared with SL. 


\textbf{Contributions.} In summary, our contributions are:
\begin{itemize}[leftmargin=*]
    \item We prove that optimizing SL is equivalent to performing DRO, which provides novel theoretical insights into SL's strengths {on both robustness and fairness}. 
    \item We propose a novel Bilateral Softmax Loss, which is easily implemented while achieving bilateral robustness to both positive and negative noise.
    \item We conduct extensive experiments on four real-world datasets and three representative backbones to demonstrate the effectiveness of BSL.
\end{itemize}

The remainder of this paper is structured as follows. 
Section \ref{sec_2} provides background on recommender systems and existing loss functions, followed by a brief introduction on distributionally robust optimization.
Our analysis on SL through theoretical proofs and empirical experiments is presented in Section \ref{sec_3}. 
Section \ref{sec_4} identifies SL's weaknesses and proposes our positive denoising loss --- Bilateral Softmax Loss.
The effectiveness of our proposal is evaluated through extensive experiments in Section \ref{exp_main}. Finally, 
Section \ref{sec_6} and \ref{sec_7} review related work on robust recommendation and draw conclusion.
\section{Preliminaries}
\label{sec_2}
In this section, we present the background of Recommender System and Distributionally Robust Optimization. 
\subsection{Recommender System}


Suppose we have a recommender system (RS) with a user set $\mathcal U$ and an item set $\mathcal I$. 
The historical user-item interactions can be expressed by a matrix $\mathbf R \in\{0,1\}^{|\mathcal U| \times |\mathcal I|}$, whose element $r_{ui}$ represents whether a user $u$ has interacted (\eg click) with an item $i$. For convenience, we define the positive (or negative) items for user $u$ as $\mathcal S^+_u\equiv\{i\in\mathcal{I}|r_{ui}=1\}$ (or $\mathcal S^-_u\equiv\{i\in\mathcal{I}|r_{ui}=0\}$). The goal of a RS is to recommend new items for each user that he may be interested in.


Regarding model optimization, existing work typically frames the task as stardard supervised learning.
Let ${P}^+_u$ and ${P}^-_u$ denote the distribution of positive and negative training instances respectively. 
${P}^+_u$, is usually set to a uniform distribution over $\mathcal{S}^+_u$, while ${P}^-_u$ can be configured either as a uniform distribution over $\mathcal{S}^-_u$ or using population-based sampling from $\mathcal{S}^-_u$.
There are mainly three types of loss functions in RS \cite{rendle2022item}:
\begin{itemize}[leftmargin=*]
    \item Pointwise loss frames the problem of recommendation from implicit feedback as a classification or regression problem, encouraging model predictions close to the corresponding labels:
  
      \begin{equation}
        \begin{aligned}
          \mathcal{L}_{Pointwise}(u)= & -\mathbb E_{i \sim {P}^+_u} [\phi (f(u,i))] \\
          & +  c\mathbb E_{j\sim {P}^-_u} [\psi (f(u,j))]
        \end{aligned}
      \end{equation}

      where $f(u,i)$ is the predicted score, $c$ balances positive and negative contributions, and $\phi(\cdot)$ and $\psi(\cdot)$ are functions adapted for different loss choices. For example, in terms of BCE and MSE, we have:
\begin{equation}
\begin{aligned}
&\left\{ {\begin{aligned}
\phi {{(f(u,i))}_{BCE}} &=  \log \sigma (f(u,i))\\
\psi {{(f(u,j))}_{BCE}} &=  - \log \left( {1 - \sigma (f(u,j))} \right)
\end{aligned}} \right. \\
&\left\{ {\begin{aligned}
\phi(f(u,i))_{MSE}&= -{||f(u,i) - 1||}^2\\
\psi(f(u,j))_{MSE}&= {||f(u,j) - 0||}^2 
\end{aligned}} \right.
\end{aligned}
\end{equation}


    \item Pairwise loss enforces higher scores for positive items versus the negative counterparts:
    \begin{gather}
      \mathcal{L}_{Pairwise}(u)= - \mathbb E_{i \sim P^+_u,\  j\sim P^-_u} \left[\varphi\left(f(u,i) - f(u,j)\right)\right]
    \end{gather}
    For the classical BPR loss, $\varphi(.)$ can is set to log-Sigmoid function.
    \item Softmax Loss
    normalizes model predictions into a multinomial distribution and optimizes the probability of positive instances over negative ones\footnote{Note that here we simply remove the positive term in the denominator as it makes negligible contribution when $N^-$ takes a relatively large value. Also, recent work suggests that this treatment could boost the embedding uniformity and empirically yield slightly better performance \cite{yeh2022decoupled}.}:
        \begin{gather}
          \label{sl_eq}
          \mathcal{L}_{SL} (u)= - \mathbb E_{i \sim P^+_u} \Bigg [\log 
{\frac{\exp{(f(u,i)/\tau)}}{N^-\mathbb E_{ j\sim P^-_u} [\exp{(f(u,j)/\tau)}]}} \Bigg ]
        \end{gather}
        where $N^-$ denotes the number of sampled negative instances. 
        Note that SL (\cf Eq.(\ref{sl_eq})) can be expressed by a combination of positive and negative parts:
        \begin{equation}
          \label{eq_SL}
        \begin{aligned}
          \mathcal{L}_{SL}(u) &= \underbrace{-\mathbb{E}_{i \sim P^+_u} [f(u,i)]}_{\text{Positive Part}} + \underbrace{{\tau}\log \mathbb E_{j\sim P^-_u}[\exp(f(u,j)/\tau)]}_{\text{Negative Part}}
          \end{aligned}
        \end{equation}
        Upon comparison between the $\mathcal{L}_{SL}$ and the point-wise loss, we discern a significant distinction primarily within the negative part --- SL utilizes a specific Log-Expectation-Exp structure (\ie $\log\mathbb E[\exp(.)]$) in the negative part.
        In the forthcoming section, we will elucidate the advantages inherent to this structure. Notably, an additional hyperparmeter $\tau$ is always introduced in SL. The role of this hyperparameter within the context of SL will also be explored.
\end{itemize}
\subsection{Distributionally Robust Optimization}

The success of machine learning relies on the assumption that the test distribution matches the training distribution (\aka, iid assumption). However, this assumption fails to hold in many real-world applications, leading to sub-optimal performance. Distributionally Robust Optimization (DRO) \cite{ben2013robust, duchi2016statistics, hu2013kullback} 
addresses this by demanding models to perform well not only on the observed data distribution, but over a range of distribution with perturbations.
Formally, DRO plays a min-max game: it first identifies the most difficult distribution (\aka worst-case distribution) from the uncertainty set $\mathbb{P}$, and then optimizes the model under that distribution: 
\begin{equation}
  \label{eq2}
  \begin{aligned}
      \hat{\theta}&= \operatorname*{argmin}_\theta \{ \operatorname*{max}_{P \in \mathbb{P} }  \mathbb{E}_{x\sim P}[\mathcal{L}(x;\theta)] \} \\
      \mathbb{P}&=\{P\in \mathbb D:D(P,P_o)\leq \eta\}
      \end{aligned}
\end{equation}


\noindent where $\mathbb D$ denotes the set of all distributions, $D(.,.)$ measures the distance between the original distribution $P_o$ and the perturbed distribution $P$; $\eta$ denotes the distance threshold. The uncertainty set $\mathbb{P}$ includes the possible distributions to optimize well over. Hence, the choice of $D$ and $\eta$ are critical as they control the uncertainty set $\mathbb{P}$.

\section{Understanding SL from DRO}
\label{sec_3}
In this section, we conduct thorough analyses on SL with both theoretical proofs and empirical experiments.
\subsection{Theoretical Analyses}

\newtheorem{lem}{Lemma}
\begin{lem}
  \label{lam0}
  Optimizing Softmax loss (\cf Equation \ref{sl_eq}) is equivalent to performing Distributionally Robust Optimization over the original point-wise loss, \ie optimizing: 

  \begin{equation}\label{l_dro}
  \begin{aligned}
    \mathcal{L}_{DRO}(u) &= -\mathbb{E}_{i \sim P^+_u} [f(u,i)] + \operatorname*{max}_{P \in \mathbb{P} } \mathbb E_{j\sim P}[f(u,j)] \\
    \mathbb{P}&=\{P\in \mathbb D:D_{KL}(P,P^-_u)\leq \eta \}
    \end{aligned}
  \end{equation}
  where $\mathbb{P}$ denotes the uncertainty set of the negative data distribution; $D_{KL}$ denotes the KL-divergence from distribution $P$ to the original $P^-_u$; $\eta$ denotes the robustness radius controlling how $P$ deviates from $P^-_u$.
\end{lem}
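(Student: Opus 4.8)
The plan is to exploit the fact that the positive parts of $\mathcal{L}_{SL}(u)$ in Eq.~(\ref{eq_SL}) and of $\mathcal{L}_{DRO}(u)$ in Eq.~(\ref{l_dro}) are literally identical, so the entire claim collapses onto matching the two negative parts. Concretely, I must establish that the KL-constrained worst-case expectation equals the Log-Expectation-Exp term up to an additive constant,
\[
\max_{P \in \mathbb{P}} \mathbb{E}_{j \sim P}[f(u,j)] = \tau \log \mathbb{E}_{j \sim P^-_u}\big[\exp(f(u,j)/\tau)\big] + \tau \eta,
\]
where the temperature $\tau$ and the radius $\eta$ are linked through the active KL constraint. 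Since the extra term $\tau\eta$ (and the earlier discarded $\tau\log N^-$) does not depend on the model parameters inside $f$, it drops out of the argmin over $\theta$, yielding the asserted equivalence of the two optimization problems.

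First I would cast the inner maximization as a constrained convex program over the negative distribution $P$: the objective $\mathbb{E}_{j \sim P}[f(u,j)]$ is linear in $P$, the feasible region is the probability simplex intersected with $\{D_{KL}(P, P^-_u) \le \eta\}$, and $D_{KL}(\cdot, P^-_u)$ is convex in $P$. I would then form the Lagrangian by introducing a multiplier $\beta \ge 0$ for the KL constraint and a separate multiplier for the normalization constraint $\sum_j P(j) = 1$. Taking the pointwise (variational) derivative with respect to $P(j)$ and setting it to zero yields the worst-case distribution as an exponential tilt of the nominal one,
\[
P^\star(j) \propto P^-_u(j)\,\exp\!\big(f(u,j)/\beta\big),
\]
which is precisely the Gibbs/softmax reweighting that shifts mass toward high-scoring (hard) negatives.

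Next I would substitute $P^\star$ back into the Lagrangian. The normalization constant produces the partition function $\mathbb{E}_{j \sim P^-_u}[\exp(f(u,j)/\beta)]$, and after simplification the dual value collapses to $\beta \log \mathbb{E}_{j \sim P^-_u}[\exp(f(u,j)/\beta)] + \beta\eta$. Identifying the dual variable $\beta$ with the fixed temperature $\tau$ recovers exactly the SL negative part plus the constant $\tau\eta$. To close the argument I would invoke strong duality: Slater's condition holds because $P^-_u$ itself is strictly feasible ($D_{KL}(P^-_u,P^-_u)=0<\eta$), and the program is a concave maximization over a convex set, so there is no duality gap; the KKT stationarity condition then forces the constraint to bind, $D_{KL}(P^\star, P^-_u)=\eta$, and this equation determines the correspondence $\eta=\eta(\tau)$ for each chosen temperature.

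I expect the main obstacle to be precisely this handling of the $\tau$--$\eta$ correspondence together with the strong-duality justification. In the textbook KL-DRO duality the temperature is the dual variable one \emph{minimizes} over, whereas in SL it is a fixed hyperparameter; I therefore have to argue that fixing $\tau$ and reading off the induced robustness radius $\eta$ through the binding KL constraint gives an exact equality rather than merely an upper bound, and to verify that the stationary tilt $P^\star$ is feasible and genuinely attains the maximum (the linear objective over the convex KL ball guarantees the maximizer sits on the boundary). The remaining steps --- the exponential-family normalization and discarding the parameter-independent constants --- are routine.
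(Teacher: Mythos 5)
Your proposal is correct, and at its core it is the same argument as the paper's: both reduce the claim to the negative part, set up the KL-constrained inner maximization as a convex program, pass to the Lagrangian, arrive at the dual value $\tau\log\mathbb{E}_{j\sim P^-_u}[\exp(f(u,j)/\tau)] + \tau\eta$, and then reverse the usual direction of the correspondence by treating the fixed temperature $\tau$ as the (optimal) dual multiplier and reading off the induced radius $\eta=\eta(\tau)$. The execution differs in one interesting way. The paper works with the likelihood ratio $L(j)=P(j)/P^-_u(j)$ and eliminates the inner variable mechanically via the convex conjugate of $g(x)=x\log x$, namely $g^*(y)=e^{y-1}$, then optimizes out the normalization multiplier $\beta$; it never exhibits the worst-case distribution in the proof and never discusses strong duality or constraint activity at all. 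You instead derive the primal maximizer explicitly as the exponential tilt $P^\star(j)\propto P^-_u(j)\exp(f(u,j)/\tau)$, substitute it back, and justify exactness via Slater plus complementary slackness. Your route is more elementary and actually closes a gap the paper leaves open: once you have the tilt in hand, you do not even need Slater --- defining $\eta(\tau)=D_{KL}(P^\star,P^-_u)$ makes $P^\star$ feasible with primal value equal to the dual value $\tau\log\mathbb{E}_{P^-_u}[e^{f/\tau}]+\tau\eta(\tau)$, which certifies optimality outright by weak duality, and this is the clean resolution of the ``fixed $\tau$ versus minimized-over dual variable'' worry you flag. One small caveat: your claim that the maximizer must sit on the boundary of the KL ball assumes $f(u,\cdot)$ is nonconstant on the support of $P^-_u$ (in the degenerate constant case the constraint need not bind, though the identity still holds trivially); the certificate argument above sidesteps this entirely. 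As a side benefit, your explicit $P^\star$ is exactly the worst-case weighting the paper later examines empirically (Figure 4b) without deriving it in the proof, whereas the paper's conjugate route generalizes more directly to other $\phi$-divergences.
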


\begin{proof}

By comparing the Softmax loss (Equation ($\ref{sl_eq}$)) with $\mathcal{L}_{DRO}(u)$ (Equation ($\ref{l_dro}$)), we can find the difference mainly lies on the negative part. In fact, the equivalence of the Log-Expectation-Exp structure and KL-based DRO objective has been revealed by recent work on other areas \cite{hu2013kullback} \cite{wu2023ADNCE}. To ensure the completeness of the article, here we also give the complete proof by referring to these work. 

Let $L(j)=P(j)/P^-_u(j)$. Note that the KL-divergence between $P$ and $P^-_u(j)$ is constrained, and thus $L(.)$ is fine definition. For brevity, we usually short $L(j)$ as $L$. Also, we define a specific convex function $g(x)=x\log x$. Then the KL divergence $D_{KL}(P,P^-_u)$ can be written as $\mathbb{E}_{j\sim P^-_u} [g(L)]$. 
Equation \eqref{l_dro} can be reformulated as follow:

\begin{equation}
  \begin{aligned}
    \label{lam0_new}
    &- \mathbb{E}_{i \sim P^+_u} [f(u,i)] + \operatorname*{max}_{L} \mathbb{E}_{j\sim P^-_u} [f(u,j)L]\\
    &\text{subject to } \mathbb{E}_{j\sim P^-_u} [g(L)] \leq \eta, \quad \mathbb{E}_{P^-_u} [L] = 1\\
\end{aligned}
\end{equation}

Considering the convex of $\mathbb{E}_{P^-_u} g(L)$ and $\mathbb{E}_{P^-_u} [f(u,j)L]$ on $L$, 
implying it is a convex optimization problem, we use the Lagrangian function to solve it:
\begin{equation}
  \begin{aligned}
    \label{lam0_Lagrangian}
    -\mathbb{E}_{i \sim P^+_u} [f(u,i)] + &\min_{\alpha \ge 0,\beta}\operatorname*{max}_{L}  \{  \mathbb{E}_{j\sim P^-_u} [f(u,j)L -  \\ 
    &\alpha (\mathbb{E}_{j\sim P^-_u} [g(L)] - \eta) + \beta (\mathbb{E}_{j\sim P^-_u} [L] - 1) \}
\end{aligned}
\end{equation}
where $\alpha$ and $\beta$ are Lagrangian multipliers. Now we focus on eliminate $L$ and $\beta$. In fact, we have: 
\begin{equation}
    \begin{aligned}
      &\operatorname*{min}_{\beta} \operatorname*{max}_{L} \{ \mathbb{E}_{j\sim P^-_u} [f(u,j)L -  \\
       &\alpha (\mathbb{E}_{j\sim P^-_u} [\phi(L)] - \eta) + \beta (\mathbb{E}_{P^-_u} [L] - 1) \} \\
      =& \operatorname*{min}_{\beta} \big \{ \alpha \eta - \beta + \alpha \operatorname*{max}_{L} \{\mathbb{E}_{ P^-_u}[\frac{f(u,j) + \beta }{\alpha}L] - \mathbb{E}_{P^-_u}[g(L)] \} \big \} \\
      =&\operatorname*{min}_{\beta} \big \{ \alpha \eta - \beta + \alpha \mathbb{E}_{ P^-_u} [ \operatorname*{max}_{L} \{\frac{f(u,j) + \beta }{\alpha}L - g(L) \} ] \big \} \\
      =&\operatorname*{min}_{\beta} \big \{ \alpha \eta - \beta + \alpha \mathbb{E}_{ P^-_u} [ g^*(\frac{f(u,j) + \beta }{\alpha}) ] \big \}
    \end{aligned}
\end{equation}
The last equality held since the definition of convex conjugate $g^*(y)=\operatorname*{max}_{x}\{ xy - g(x) \}$. Given $g(x)=x\log x$, we have $g^*(x)=e^{x-1}$:
\begin{equation}\label{key_eq}
    \begin{aligned}
      &\operatorname*{min}_{\beta} \big \{ \alpha \eta - \beta + \alpha \mathbb{E}_{ P^-_u} [ g^*(\frac{f(u,j) + \beta }{\alpha}) ] \big \} \\
      =& \operatorname*{min}_{\beta} \big \{ \alpha \eta - \beta + \alpha \mathbb{E}_{ P^-_u} [ e^{\frac{f(u,j) + \beta }{\alpha}-1} ] \big \} \\
      =& C + \alpha \log \mathbb{E}_{ P^-_u} [e^{\frac{f(u,j) }{\alpha}}]
    \end{aligned}
\end{equation}
where $C=\alpha \eta$ denotes a constant. The Last equality holds due to the optimal solution found for $\beta$.
By plugging Eq.~\eqref{key_eq} into problem~\eqref{lam0_Lagrangian}, we get the overall optimal result as follows:
\begin{equation}
  \begin{aligned}
    \label{lam0_finnal}
    &-\mathbb{E}_{ P^+_u} [f(u,i)] +\min_{\alpha \ge 0,\beta}\operatorname*{max}_{L} \bigl\{  \mathbb{E}_{ P^-_u} [f(u,j)L - \\
    &  \hspace{2cm} (\mathbb{E}_{ P^-_u} [\phi(L)] - \eta)  + \beta (\mathbb{E}_{P^-_u} [L] - 1) \bigr\}  \\
    =&\min_{\alpha \ge 0}\bigl\{- \mathbb{E}_{ P^+_u} [f(u,i)] + C + \alpha \log \mathbb{E}_{ P^-_u} [\exp({{f(u,j) }/{\alpha}})]\bigr\}  \\
    =&- \mathbb{E}_{ P^+_u} [f(u,i)]  + \alpha^{*} \log \mathbb{E}_{ P^-_u} [\exp({{f(u,j) }/{\alpha^{*}}})] + C  \\
  \end{aligned}
\end{equation}
Where the optimal $\alpha^*$ is determined by the hyperparameter $\eta$. We may reversely treat $\alpha^*$ as a surrogate hyperparameter and consider $\eta$ as a function of $\alpha^*$. In this way, by comparing Eq.\eqref{lam0_finnal} with SL and removing constant terms that is irrelevant with $f(u,i)$, we can conclude that optimizing SL is equivalence to optimizing $\mathcal{L}_{DRO}(u)$ (Eq.\eqref{l_dro}) and $\alpha$ is identical to $\tau$.    

\end{proof}
{\textbf{Remark 1 (Capability of Robustness)}} This lemma points out the essence and superiority of SL compared to other loss functions. In RS, the presence of noise inevitably leads to a significant gap between the distribution of sampled items and the true negative distribution (\aka distributional shifts). Due to the intrinsic connection to DRO, SL explicitly considers the  uncertainty of distributions to seek a model that performs well against distribution perturbation. Specifically, if the uncertainty set $\mathcal{P}$ is appropriately selected and the true negative data distribution is contained in the built uncertainty set, the robustness of the model could be guaranteed under distributional shifts, and performance superiority would begin to emerge.

\begin{lem}[Fairness]
  \label{lam1}
  With the Lemma \ref{lam0}, we can approximate the Softmax loss as follow:

  \begin{equation}
    \begin{aligned}
      \label{lam1_eq}
       \mathcal{L}_{SL} (u)=& -\mathbb{E}_{i \sim P^+_u} [f(u,i)] + \operatorname*{max}_{P \in \mathbb{P} } \mathbb E_{j\sim P}[f(u,j)] \\
    \approx&-\mathbb{E}_{i \sim P^+_u} [f(u,i)]+\mathbb{E}_{j\sim P^-_u} [f(u,j)] \\ 
    + & \frac{\mathbb V[f(u,j)]}{2\tau}+\tau \eta+o_{\infty}(1/\tau) 
  \end{aligned}
  \end{equation}

  where $\mathbb V[f{(u,j)}]$ denotes the variance of score $f({u,j})$ under the distribution $P^-_u$.
\end{lem}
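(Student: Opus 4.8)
The plan is to reduce the statement to a Taylor expansion of the Log-Expectation-Exp term that Lemma \ref{lam0} already isolated. By Lemma \ref{lam0} the negative part admits the closed form
\[
\max_{P\in\mathbb{P}}\mathbb{E}_{j\sim P}[f(u,j)] = \alpha^{*}\eta + \alpha^{*}\log\mathbb{E}_{j\sim P^-_u}\!\left[\exp\!\left(f(u,j)/\alpha^{*}\right)\right],
\]
with the optimal multiplier $\alpha^{*}=\tau$. Hence it suffices to analyze the cumulant-generating term $\tau\log\mathbb{E}_{j\sim P^-_u}[\exp(f(u,j)/\tau)]$ in the regime $\tau\to\infty$, and to show it equals $\mathbb{E}_{j\sim P^-_u}[f(u,j)]+\mathbb{V}[f(u,j)]/(2\tau)+o_\infty(1/\tau)$. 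Adding back the constant $\tau\eta$ and the positive part $-\mathbb{E}_{i\sim P^+_u}[f(u,i)]$ then yields Equation \eqref{lam1_eq} directly.

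First I would introduce the small parameter $s=1/\tau$ and expand the inner exponential pointwise as $\exp(sf)=1+sf+\tfrac{1}{2}s^{2}f^{2}+O(s^3 f^3)$. Taking the expectation under $P^-_u$ gives $\mathbb{E}[\exp(sf)]=1+s\,m_1+\tfrac12 s^{2} m_2+O(s^3)$, where $m_k=\mathbb{E}_{j\sim P^-_u}[f(u,j)^k]$ is the $k$-th raw moment. Next I would apply the scalar expansion $\log(1+x)=x-\tfrac12 x^2+O(x^3)$ to $x=s\,m_1+\tfrac12 s^2 m_2+O(s^3)$, which is legitimate since $x\to0$ as $s\to0$, then collect powers of $s$ and multiply by $\tau=1/s$. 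The $s^0$ order reproduces $m_1=\mathbb{E}[f]$, while the coefficient of $s^1$ combines into $\tfrac12(m_2-m_1^2)=\tfrac12\mathbb{V}[f]$, so that $\tau\log\mathbb{E}[\exp(f/\tau)]=\mathbb{E}[f]+\mathbb{V}[f]/(2\tau)+o_\infty(1/\tau)$. This is precisely the second-order cumulant expansion of the log-moment-generating function, the first two cumulants being the mean and the variance.

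The step I expect to be the main obstacle is the rigorous control of the remainder, i.e.\ certifying that the discarded terms are genuinely $o_\infty(1/\tau)$ rather than merely formal: interchanging the expectation with the series of $\exp$, and then expanding $\log(1+x)$, both require finite higher moments and that $x$ stay within the radius of convergence for large $\tau$. I would close this gap by assuming the predicted scores $f(u,j)$ are uniformly bounded, $|f(u,j)|\le M$, a mild and standard condition in recommendation where scores are inner products of norm-constrained or normalized embeddings. Under this bound all raw moments satisfy $m_k\le M^{k}$, the series for $\mathbb{E}[\exp(sf)]$ converges absolutely and uniformly, and the argument of the logarithm is $O(s)\to0$; a Lagrange remainder estimate then shows every neglected term is $O(1/\tau^{2})$, hence $o_\infty(1/\tau)$, which completes the argument.
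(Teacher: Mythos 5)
Your proposal is correct and takes essentially the same route as the paper: both use Lemma \ref{lam0} to write the worst-case term in closed form as $\tau\log\mathbb{E}_{j\sim P^-_u}[e^{f(u,j)/\tau}]+\tau\eta$, and then perform a second-order Taylor (cumulant) expansion of the log-moment-generating function in $\gamma=1/\tau$ around $0$, identifying the first two cumulants as the mean and the variance. Your explicit remainder control under a bounded-score assumption makes rigorous what the paper leaves as an $o_0(\gamma^2)$ step, but it does not alter the underlying argument.
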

\begin{proof}
  As the optimal value of $\operatorname*{max}_{P\in \mathbb{P}} \mathbb{E}_{j\sim P} [f(u,j)]$ can be written as $\tau \log \mathbb{E}_{j\sim P^-_u} [e^{f(u,j)/\tau}] + \tau \eta $. To ease notations, we denote $\Omega (\gamma )=\log \mathbb{E}_{j\sim P^-_u}[ e^{f(u,j)\gamma}]$ and $\gamma=1/\tau$.
  By leveraging a second-order Taylor expansion around 0 on $\Omega (\gamma )$, we have:
  \begin{equation}
    \label{lam1_eq2}
    \Omega (\gamma ) = \gamma \mathbb{E}_{j\sim P^-_u} [f(u,j)] + \frac{{\gamma}^2}{2}\mathbb V[f(u,j)] + o_0 ({\gamma}^2)
  \end{equation}
  Thereafter, we plug $\tau=1/\gamma$ into $\Omega (\gamma )$, which finishes the proof:
  \begin{equation}
    \label{lam1_eq3}
    \begin{aligned}
      &\operatorname*{max}_{P \in \mathbb{P} } \mathbb E_{j\sim P}[f(u,j)]=\tau \log \mathbb{E}_{j\sim P^-_u} [e^{f(u,j)/\tau}] + \tau \eta \\
      & \approx \mathbb{E}_{j\sim P^-_u} [f(u,j)] + \frac{\mathbb V[f(u,j)]}{2\tau} + o_{\infty}(1/\tau) + \tau \eta 
    \end{aligned}
  \end{equation}
\end{proof}
{\textbf{Remark 2 (Capability of Fairness)}} {
This lemma provides the insight that SL pursues the fairness of the recommendation system through regularization on the variance of negative samples $f(u,j)$. 
Recent research \cite{williamson2019fairness} has indicated that the variance penalty corresponds to fairness. From Equation \eqref{lam1_eq}, we can conclude that SL implicitly introduces a regularizer that penalizes the variance of the model predictions on the negative instances. It is important to note that in a typical recommender systems, recommendation model is prone to give extensive higher scores on popular items than unpopular ones, incurring popularity bias \cite{chen2021autodebias}. SL could mitigate this effect to a certain degree, as it inherently reduces the variance of model predictions, which implies that the predictions are likely to be more uniform. Consequently, the discrepancy in model predictions between popular and unpopular items is diminished, leading to more fair recommendation results. Unpopular items get more opportunity in recommendation, and thus we observe better NDCG@20 for those unpopular groups.}

\begin{corollary}[The optimal $\alpha^*$ - Lemma 5 of \cite{faury2020distributionally}]
  \label{coro1}
  The value of the optimal $\alpha^*$ (i.e., temperature $\tau$) can be approximated as follows:
    \begin{align}
      \label{eta_}
        \tau^* \approx \sqrt{\frac{\mathbb V[f{(u,j)}]}{2\eta}}.
    \end{align}
    where $\mathbb V[f{(u,j)}]$ denotes the variance of score $f({u,j})$ under the distribution $P^-_u$.
\end{corollary}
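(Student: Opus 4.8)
The plan is to extract the scalar subproblem in $\alpha$ that already appeared inside the proof of Lemma~\ref{lam0} and minimize it explicitly. Recall that after eliminating $L$ and $\beta$, the negative part of the objective reduces to $\min_{\alpha \ge 0} h(\alpha)$, where
\begin{equation}
  h(\alpha) = \alpha \eta + \alpha \log \mathbb{E}_{j\sim P^-_u}\bigl[\exp(f(u,j)/\alpha)\bigr],
\end{equation}
and the optimal $\alpha^*$ is exactly the minimizer of this univariate function (the positive part $-\mathbb{E}_{P^+_u}[f(u,i)]$ does not depend on $\alpha$). So the whole task is a first-order stationarity computation on $h$.

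First I would replace the intractable Log-Expectation-Exp term by its second-order approximation already derived in Lemma~\ref{lam1}. Writing $\gamma = 1/\alpha$ and using Eq.~\eqref{lam1_eq2}, we have $\alpha \log \mathbb{E}_{j\sim P^-_u}[\exp(f(u,j)/\alpha)] \approx \mathbb{E}_{j\sim P^-_u}[f(u,j)] + \mathbb V[f(u,j)]/(2\alpha)$, so that
\begin{equation}
  h(\alpha) \approx \alpha \eta + \mathbb{E}_{j\sim P^-_u}[f(u,j)] + \frac{\mathbb V[f(u,j)]}{2\alpha}.
\end{equation}
Differentiating with respect to $\alpha$ and setting the derivative to zero gives $\eta - \mathbb V[f(u,j)]/(2\alpha^2) = 0$, hence $\alpha^{*2} = \mathbb V[f(u,j)]/(2\eta)$, i.e.\ $\alpha^* \approx \sqrt{\mathbb V[f(u,j)]/(2\eta)}$. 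Since Lemma~\ref{lam0} established that $\alpha$ is identical to the temperature $\tau$, this is precisely the claimed expression for $\tau^*$. Positivity of the second derivative $\mathbb V[f(u,j)]/\alpha^3 > 0$ confirms this stationary point is the minimizer.

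The main obstacle is rigor of the approximation step: I am differentiating a truncated Taylor series and treating the residual $o_\infty(1/\alpha)$ as negligible, which implicitly assumes that its derivative is also of lower order near the optimum. To make this airtight I would need to control the derivative of the remainder term of $\Omega(\gamma)$ uniformly, i.e.\ argue that the $o_0(\gamma^2)$ error in Eq.~\eqref{lam1_eq2} has a derivative in $\gamma$ that is $o_0(\gamma)$, so that it does not perturb the balance between the $\eta$ term and the $1/(2\alpha)$ term at leading order. Given that the corollary is explicitly stated as an approximation and attributed to Lemma~5 of \cite{faury2020distributionally}, I expect the intended argument to be exactly this leading-order balance rather than a fully quantitative error bound, so I would present the stationarity computation as the core and remark that higher-order corrections are subdominant for large $\alpha$ (small $\gamma$).
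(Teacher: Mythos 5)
Your derivation is correct and is essentially the intended argument: the paper states Corollary \ref{coro1} without any in-text proof, delegating it to Lemma 5 of \cite{faury2020distributionally}, and that result is obtained by exactly the leading-order balance you carry out --- substituting the second-order expansion of Lemma \ref{lam1} (Eq.~\eqref{lam1_eq2}) into the scalar subproblem $\min_{\alpha \ge 0}\{\alpha\eta + \alpha\log\mathbb{E}_{j\sim P^-_u}[\exp(f(u,j)/\alpha)]\}$ left over from Eq.~\eqref{lam0_finnal} in Lemma \ref{lam0}, then solving the first-order condition and checking convexity. Your caveat about differentiating a truncated Taylor series (needing the remainder's derivative to be lower order) is the right one to flag, and it is consistent with the corollary being stated only as an approximation.
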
 



{\textbf{Remark 3 (Role of the Temperature $\tau$).}} From Lemma 1, we can see that $\tau$ is not just a heuristic design, but can be interpreted as a Lagrange multiplier in solving DRO optimization problem. In additional, Corollary \ref{coro1} indicates that $\tau$ is determined by the $\eta$ and the variance of $f{(u,j)}$. It means that tuning the hyperparameter $\tau$ in SL is essentially equivalent to adjusting the robustness radius. 

The above insight also could help us to better understand how temperature affects model performance. When the value of $\tau$ is excessive large, it creates a limited uncertainty set which fails to encompass the true negative distribution, resulting in a lack of robustness (\cf Figure \ref{fig::ppt_Performance} left). As $\tau$ decreases, the robustness radius $\eta$ grows, the model is exposed to more challenging distributions. 
Given that the uncertainty set is more prone to encompass the true distribution, there is a potential to enhance both the robustness and accuracy of the model. 
However, a continual decrease of $\tau$ would also increase the risk of implausible worst distribution. Typically, when $\eta$ is too large, the worst-case distribution in DRO may not be the desired distribution but rather uninformative one hurting model performance (\cf Figure \ref{fig::ppt_Performance} right). These two aspects create trade-off of selecting $\tau$. we will empirical verify it in subsection \ref{tau_exp}.

\begin{figure}[t]\centering
  \includegraphics[width=1.0\linewidth]{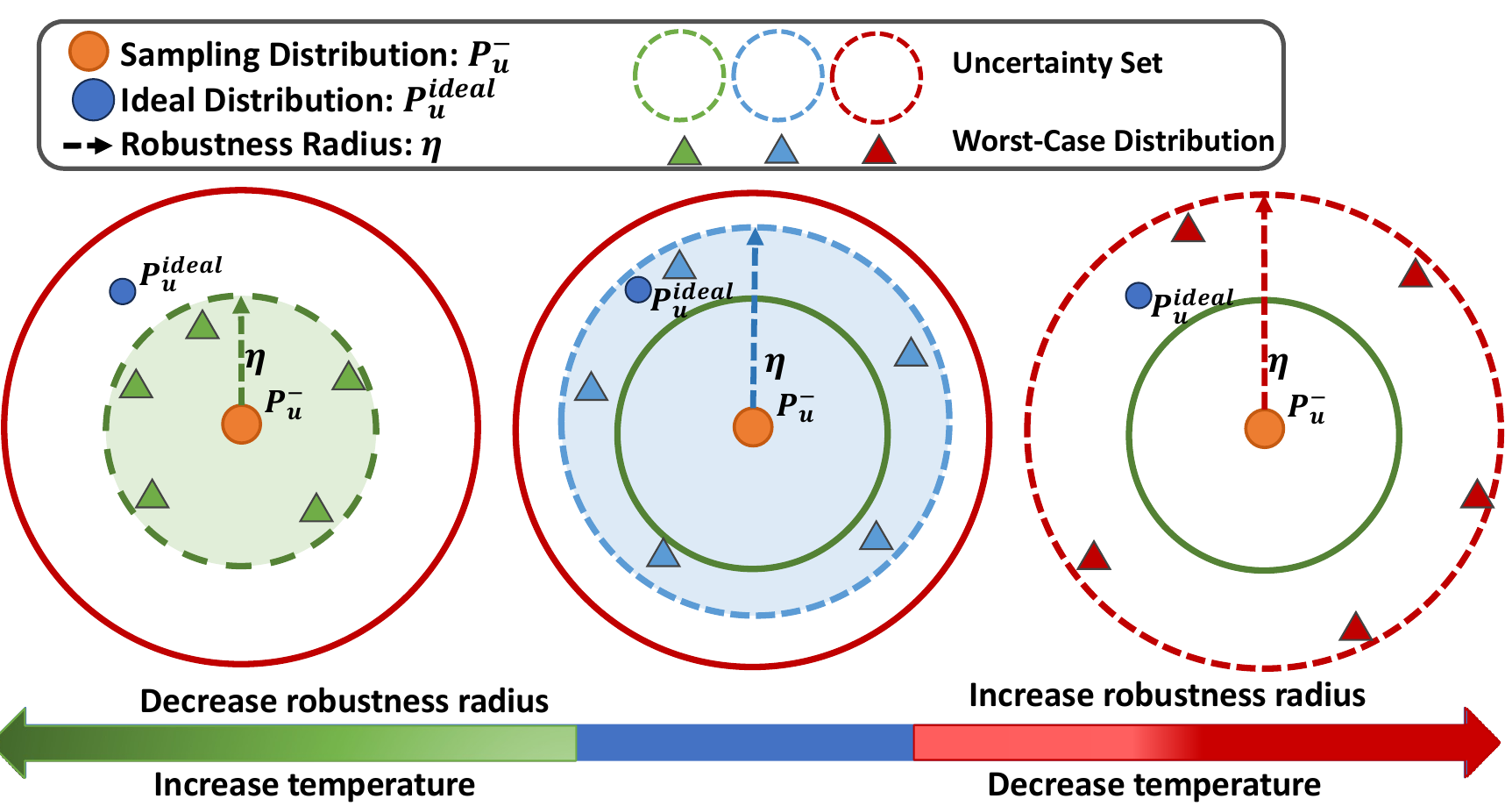}
  \caption{{The relationships between $\tau$, $\eta$ and uncertainty set}}
  \label{fig::ppt_Performance}
\end{figure}

\subsection{Empirical Analysis}
In this section, we conduct the following empirical experiments to understand the essence of SL. The detailed experimental setup refers to section \ref{exp_main}. Here We take Yelp2018 dataset~\cite{he2020lightgcn} as an example, and could draw similar conclusions from other datasets.
\label{Empirical}
\subsubsection{\textbf{Experimental design.}} To understand the above theories, here we conduct four experiments. \textbf{(1)} We record the performance of SL under different choices of $\tau$ and noise level \footnote{In the sampling of negative samples, $r_{noise}$ represents the ratio of the sampling probability of positive samples to that of negative samples. Therefore, a larger value of $r_{noise}$ indicates that more positive samples are mistakenly classified as negative samples.} ($r_{noise}$). (Figure \ref{fig::noise_eta}a). The higher value of $r_{noise}$, the more noise in the negative data. It can be implemented during negative sampling, where a certain portion of positive instances are sampled as negatives. \textbf{(2)} We grid search the best $\tau$ for different scenarios and visualize the distribution of $\eta$ according to Equation \eqref{eta_} (Figure \ref{fig::noise_eta}b). \textbf{(3)} To verify fairness, we show SL's performance on different item groups with different popularity (Figure \ref{fig::t_changes_performance}a). \textbf{(4)} We also study the relationship between the prediction scores $f(u,j)$ and weights $P^*(j)$ in DRO. The weights can be understood as the sampling probability of an item in the worst-case $P^*$ (Figure \ref{fig::t_changes_performance}b)\footnote{Here $P^*(j)$ and $P^*$ both represent the worst case distribution on negative part. $P^*=\mathop{\arg\max}_{P \in \mathbb{P} } \mathbb E_{j\sim P}[f(u,j)]$ }.

\begin{figure}[t]\centering
  \includegraphics[width=1.0\linewidth]{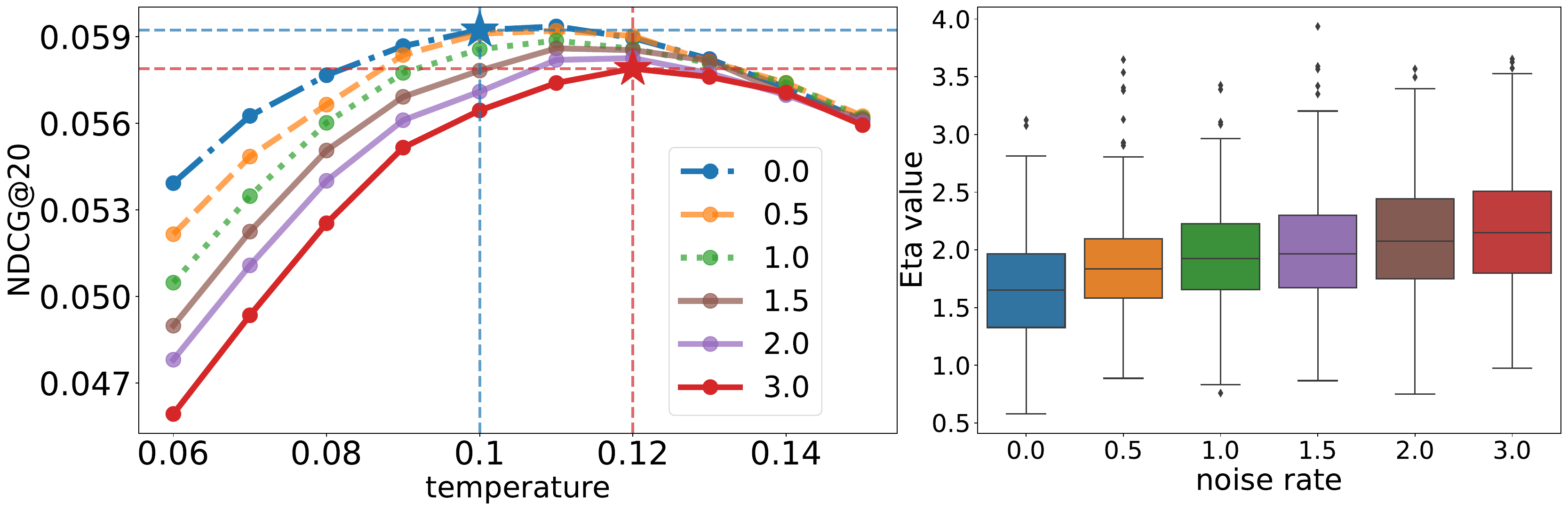}
  \caption{Robustness Analysis. (a) \textbf{Left}: we show the NDCG of SL with different noise ratios across different temperatures $\tau$. (b) \textbf{Right}: 
  we show the distribution of robustness radius $\eta$ with the best choice of $\tau$ across different noise ratios.}
  \label{fig::noise_eta}
\end{figure}

\subsubsection{\textbf{Exploring robustness \wrt $\tau$}} 
\label{tau_exp}
Figure \ref{fig::noise_eta}a shows the effect of changing the temperature on the model performance. As can be seen, the performance grows with the increase of $\tau$ at the beginning, while drops with further increase. This phenomenon is as expected. An excessively high value of $\tau$ leads to insufficient perturbations in the model adversarial training, resulting in a lack of robustness and subpar generalization performance. An overly small $\tau$ might also degrade performance due to the increasing risk of exposure to implausible and abnormal worst-case distributions. Therefore, it is important to select an appropriate value of $\tau$ to ensure optimal performance of the model.

From Figure \ref{fig::noise_eta}b, we find that when we search for the best $\tau$ at each noise level and record the corresponding distribution of $\eta$, $\eta$ actually rises with the increase of false negative samples. It is consistent with our intuition, more noisy data requires a larger robustness radius. However, as shown in Figure \ref{fig::noise_eta}a, when the noise rate ($0.0\to 3.0$) enlarges, the best $\tau$ for SL gradually increases ($0.10\to 0.12$), which is against our intuition that small $\tau$ represents larger $\eta$ to obtain strong robustness and superior performance. The contradiction is that the optimal $\tau$ is also correlated with the variance of $f{(u,i)}$. Although the robustness radius increases, the cases with more noisy samples would have a larger variance and thus incur a larger value of optimal $\tau$, which is consistent with Corollary \ref{coro1}.



\subsubsection{\textbf{Exploring worst-case distribution.}} 
To verify the advantages of SL that relieves distribution shifts in negative sample distribution, we randomly choose one batch of training data and record the negative sample score and respective worst-case $P^*$ in Equation~\eqref{l_dro}. From Figure \ref{fig::t_changes_performance}b, we observe that SL places more emphasis on {hard} negative samples. Meanwhile, if we decrease the value of $\tau$ (\eg $\tau$=0.09), the distribution of weight exhibits more "extreme", which increases the difficulty in optimization. In contrast, increasing $\tau$ (\eg $\tau$=0.11) represents a smaller robustness radius, and the distribution becomes more "gentle". We remark this phenomenon has also been observed by \cite{wu2022effectiveness}, but here we give explanation from a new perspective of DRO.


\begin{figure}[t]\centering
  \includegraphics[width=1.\linewidth]{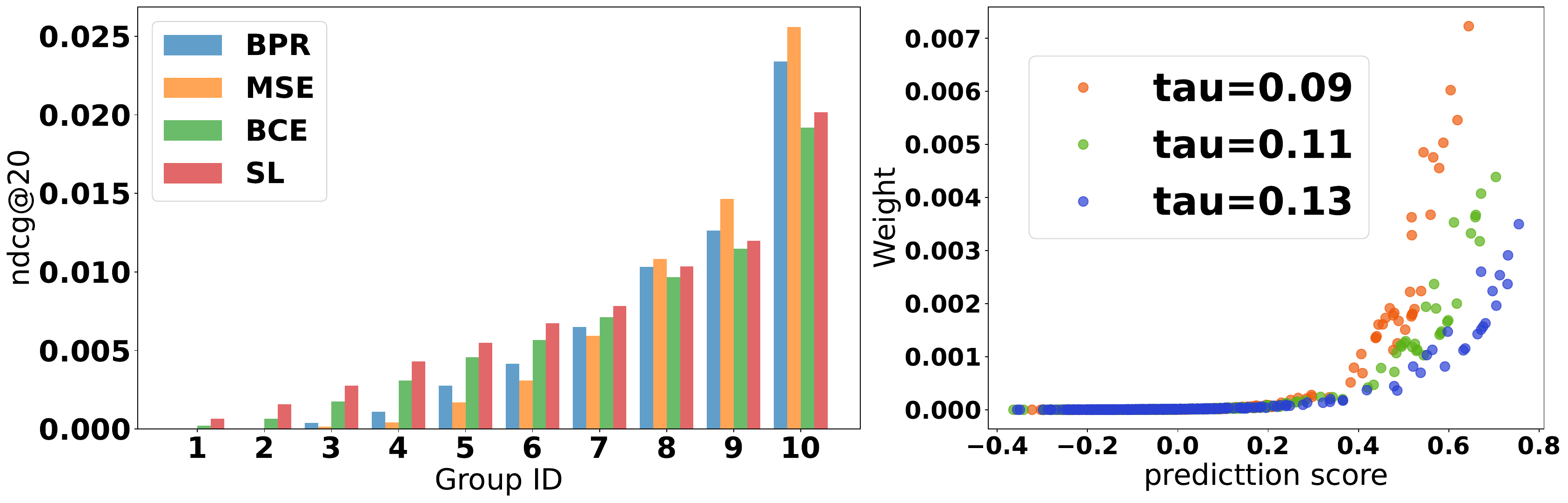}
  \caption{(a) Left: Performance comparison over different item groups among different loss functions. (b) Right: We visualize the weight of each negative sample in DRO concerning the prediction score for different $\tau$.}
  \label{fig::t_changes_performance}
\end{figure}

\subsubsection{\textbf{Exploring fairness in DRO}} 
\label{tau_fair}
To verify the fairness of SL compared to other loss functions, we follow \cite{wu2022effectiveness} and divide items into ten groups \wrt item popularity (\ie interaction frequency). The y-axis represents the cumulative score for a single group and the larger GroupID denotes the group where items are more popular. As shown in Figure \ref{fig::t_changes_performance}a, SL achieves superior performance on those unpopular items comparing with traditional loss functions (BPR, MSE, BCE). We attribute this phenomenon to the regularization of the variance of negative samples (see Equation \eqref{lam1_eq}). In other words, SL not only pursues accuracy but also fairness in recommendation.

{To establish a clear contrast, we conducted ablation study by removing the variance penalty term and, compared the performance of the standard SL loss with the revised loss. The results are presented in Figure \ref{fig::variance_w_wo_}. As can be seen, the exclusion of the variance term significantly exacerbates the unfairness of recommendation --- \ie better performance on popular groups (\eg groups 8-10) while much worse performance on unpopular groups (\eg groups 1-5).
}

\begin{figure}[t]\centering
  \includegraphics[width=0.5\linewidth]{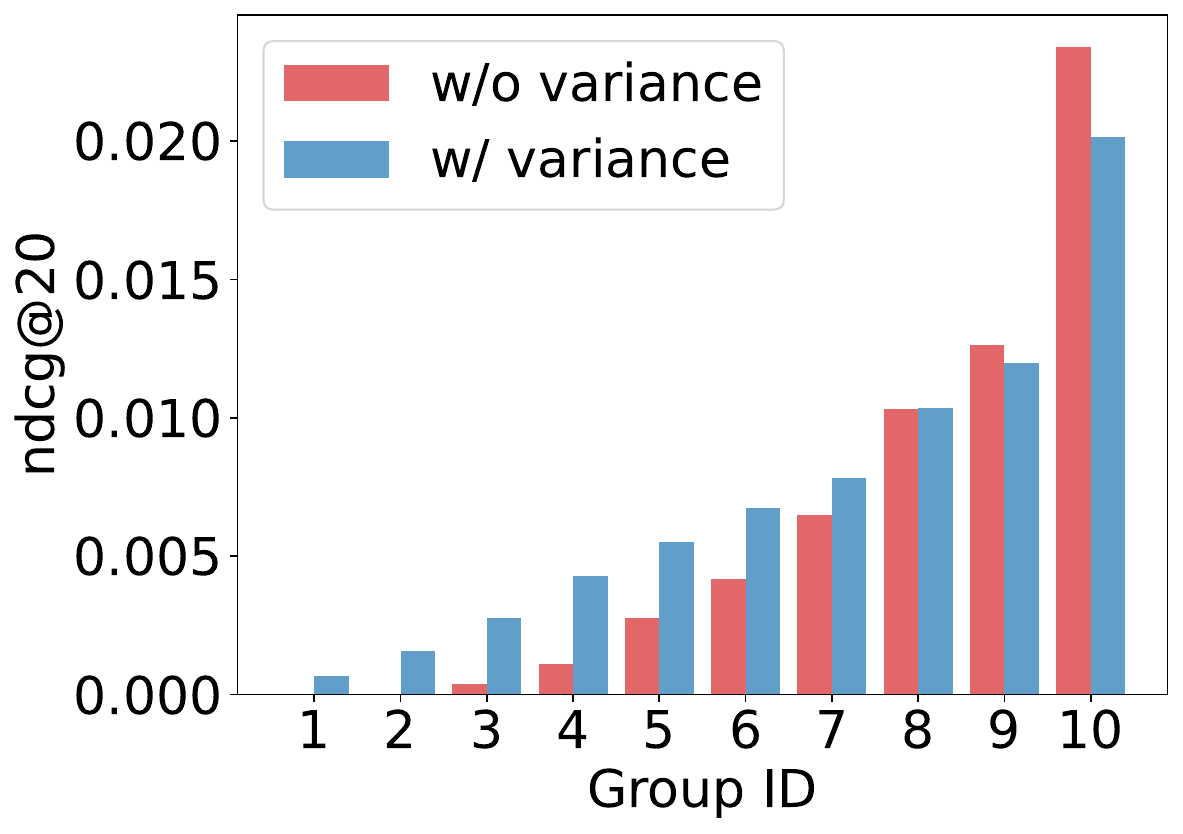}
  \caption{{Ablation study on variance terms. The item groups with larger ID suggest these items have larger popularity.}}
  \label{fig::variance_w_wo_}
\end{figure}
\section{Proposed Loss Function}
\label{sec_4}
In this section, we first analyze the necessity of positive denoising and then propose a novel loss function through the lens of DRO.
\subsection{Necessity of Positive Denoising}
In the former section, we examined the superiority of SL, particularly its robustness in negative sampling. However, it is worth noting that noise can also be present in positive samples. This is particularly relevant in recommendation systems, where clicks can be influenced by various factors, such as attractive titles or high popularity, leading to a gap between the observed positive data and the user's true preference \cite{wang2021clicks, wang2021denoising}. Training a model directly on such noisy data may result in poor performance.

To investigate this potential yet critical issue in recommendation systems, we conducted experiments where we randomly treated a certain proportion of negative items (in accordance with the interaction frequency per user) as fake positive instances and evaluated the performance of the model using SL. As illustrated in Figure \ref{fig::pos_noise_ratio}, a rise in the noise ratio corresponded to a decline in performance. We suspect that this phenomenon may be even more pronounced in real-world datasets. Therefore, we are inclined to leverage the benefits of DRO to tackle the aforementioned critical issues.

\subsection{Bilateral SoftMax Loss}
In order to enhance the robustness of the model to both noisy positive and negative data, we propose the Bilateral SoftMax Loss (BSL), which extends the benefits of SL to both the positive and negative sides. It is worth noting that SL (as seen in Eq.(\ref{sl_eq})) can be expressed as follows, comprising of two components:
  \begin{equation}
    \begin{aligned}
    \mathcal{L}_{SL}(u) &= \underbrace{\text{\fcolorbox{black}{pink}{\parbox{0.3\linewidth}{$-\mathbb{E}_{i \sim P^+_u} [f(u,i)]$}}}}_{\text{Positive Part}} + \\
    & \underbrace{{\tau}\log \mathbb E_{j\sim P^-_u}[\exp(f(u,j)/\tau)]}_{\text{Negative Part}}
    \end{aligned}
    \end{equation}
The effectiveness of SL is attributed to its negative loss structure, which involves the Log-Expectation-Exp structure (i.e., $\log\mathbb E[\exp(.)]$), equivalent to conducting distributional robust optimization over the vanilla point-wise loss (as illustrated in Lemma \ref{lam0}). Therefore, it is reasonable to incorporate the positive loss into this structure, allowing it to benefit from the robust property. Based on this intuition, the Bilateral SoftMax Loss can be formulated as follows:
\begin{equation}
  \begin{aligned}
    \mathcal{L}_{BSL}(u) &= \underbrace{\text{\fcolorbox{black}{pink}{\parbox{0.51\linewidth}{$-\tau_1\log\mathbb{E}_{i \sim P^+_u} [\exp(f(u,i)/\tau_1)]$}}}}_{\text{Positive Part}}\\
    &+ \underbrace{{\tau_2}\log \mathbb E_{j\sim P^-_u}[\exp(f(u,j)/\tau_2)]}_{\text{Negative Part}}
    \end{aligned}
  \end{equation}

It is important to note that in BSL, distinct temperatures are utilized for the positive ($\tau_1$) and negative ($\tau_2$) components. This is mainly due to the belief that in real-world datasets, the positive and negative data may exhibit different levels of noise.
\begin{figure}[t]\centering
  \includegraphics[width=0.8\linewidth]{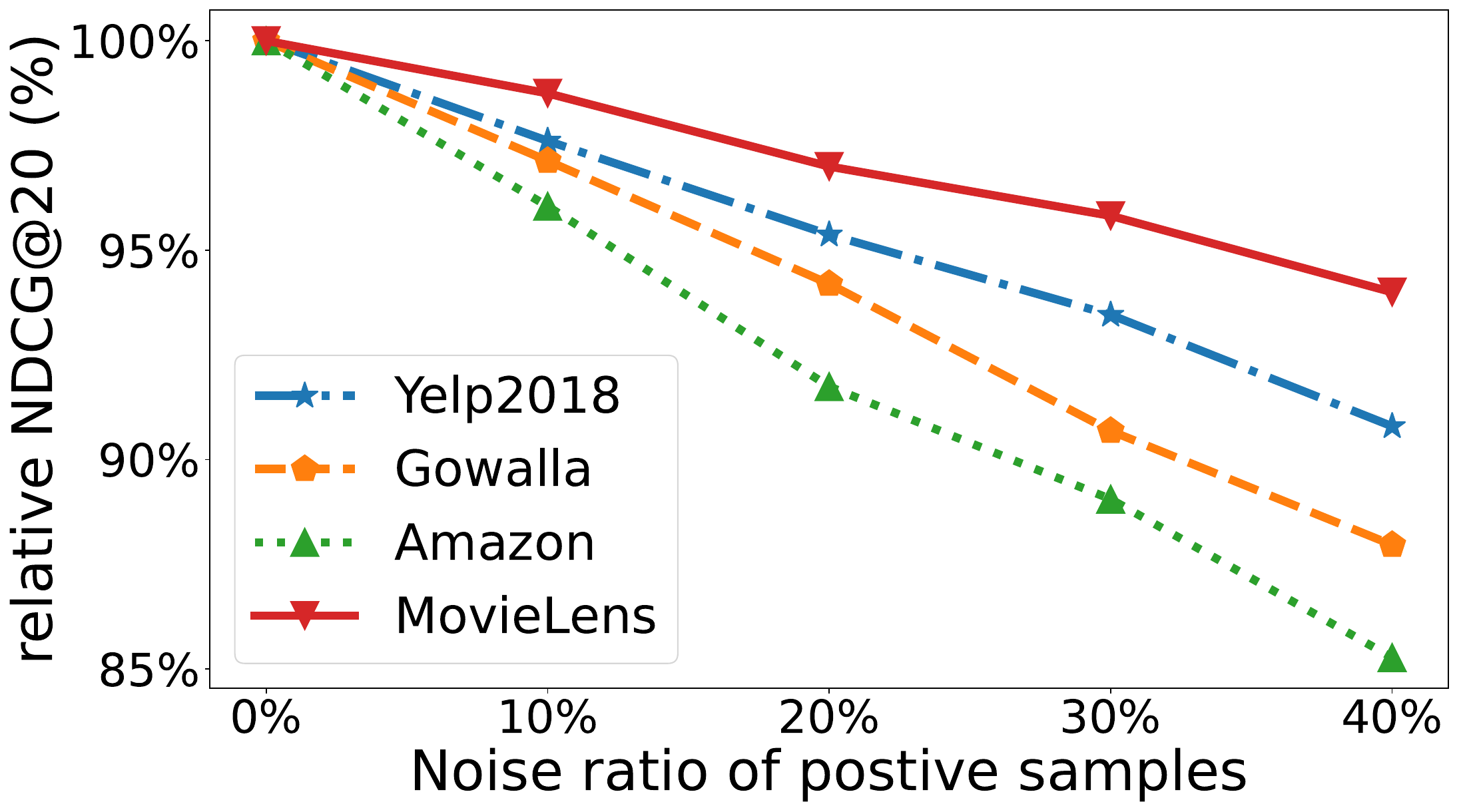}
  \caption{The performance comparison with adding a certain ratio of noisy data into the positive data.}
  \label{fig::pos_noise_ratio}
\end{figure}

      Overall, we remark that BSL satisfies the following desirable properties:
      \begin{itemize}[leftmargin=*]
        \item \textbf{Bilateral Robustness.} Thanks to the application of Log-Expectation-Exp structure on both positive and negative sides, 
        BSL possesses the capability to maintain robustness on both sides. Unlike previous research that primarily focuses on positive sample denoising \cite{wang2021clicks, wang2021denoising} or negative sample debiasing \cite{MixGCF2021, zhang2013optimizing, yang2020understanding}, our work elegantly accomplishes both objectives. In section \ref{exp_main}, we also provide empirical validation of its robustness and superior performance.
        \item \textbf{Model-Agnostic.} BSL is model-agnostic and straightforward to implement. Notably, BSL only requires one additional line of code compared to the standard SL (as seen in Algorithm \ref{alg:code2}). In our experiments, we also apply BSL to common recommendation models such as MF \cite{MF}, NGCF \cite{NGCF}, LightGCN \cite{he2020lightgcn}, SGL  \cite{wu2021self}, SimSGL \cite{yu2022graph}  and LightGCL \cite{caisimple} to validate its effectiveness.
        
      \end{itemize}
      \textbf{}

\begin{table}[!t]
    \centering
    \caption{Statistics of the datasets.}
    \resizebox{1.0\linewidth}{!}{
      \begin{tabular}{c|r|r|r|c}
        \toprule
        Dataset     & \#Users & \#Items & \#Interactions & Density \\ \midrule
        Amazon & 192,403 & 63,001 & 1,689,188    & 0.014\% \\
        Yelp2018    & 31,668 & 38,048 & 1,561,406    & 0.130\% \\ 
        Gowalla     & 29,858 & 40,981 & 1,027,370    & 0.084\% \\ 
        Movielens-1M & 6,022 & 3,043 & 995,154 & 5.431\% \\
        \bottomrule
        \end{tabular}}
    \label{data_statistics}
\end{table}

\section{Experiments}
\label{exp_main}
In this section, we present various experiments to verify the superiority of our model. We aim to answer the following research questions:
\begin{itemize}[leftmargin=*]
  \item \textbf{RQ1:} How do BSL and SL perform compared to other loss functions?
  \item \textbf{RQ2:} Can BSL and SL maintain stability when we increase the number of noisy negative instances?
  \item \textbf{RQ3:} Is BSL more resilient to positive noise than SL?
  \item {\textbf{RQ4:} How does BSL perform when hyper-parameters are changed ?}
\end{itemize}

\subsection{Experimental Setup}
\textbf{Datasets and Evaluation Protocol.} We utilize four publicly available datasets, namely Yelp2018\cite{he2020lightgcn}, Amazon \cite{huang2021mixgcf}, Movielens \cite{yu2020graph}, and Gowalla \cite{he2017neural_a}, to evaluate the effectiveness of our model. To ensure a fair comparison, we adhere to the traditional settings on collaborative filtering and employ the same data split as they did \cite{he2020lightgcn}. Table \ref{data_statistics} showcases the statistics of the aforementioned datasets. Additionally, we use conventional metrics, Recall@20 and NDCG@20, to assess the recommendation performance.

\begin{table*}[h]
  \centering
  \caption{Overall performance comparison. (+X) denotes the loss functions. The best result is bolded and the runner-up is underlined.} 
  \resizebox{1.02\textwidth}{!}{
    \begin{tabular}{c |l|cc|cc|cc|cc}
      \toprule
      \multicolumn{2}{c|}{\multirow{2}{*}{Model}} & \multicolumn{2}{c|}{Amazon} & \multicolumn{2}{c|}{Yelp2018}     & \multicolumn{2}{c|}{Gowalla}      & \multicolumn{2}{c}{Movielens-1M}  \\ \cline{3-10} 
      \multicolumn{2}{c|}{}      & Recall@20       & NDCG@20         & Recall@20       & NDCG@20         & Recall@20           & NDCG@20         & Recall@20       & NDCG@20         \\ \midrule
      
      \multicolumn{1}{c|}{{WWW'2017}} &CML                 &     0.0362      &    0.0165       & 0.0622          & \underline{0.0536}          & 0.1670          & 0.1292          & 0.1730          & 0.1563          \\
      \multicolumn{1}{c|}{SIGIR'2019}&ENMF                   &     0.0405      &      0.0182      & 0.0624          & 0.0515          & 0.1523          & 0.1315          & 0.2315          & 0.2069          \\ \hline
      \multicolumn{1}{c|}{SIGIR'2020}&NIA-GCN                &      -    &       -    & 0.0599          & 0.0491          & 0.1359          & 0.1106          & 0.2359          & 0.2242          \\
      \multicolumn{1}{c|}{AAAI'2020}&LR-GCCF                &     0.0326      &    0.0158        & 0.0561          & 0.0343          & 0.1519          & 0.1285          & 0.2231          & 0.2124          \\
      \multicolumn{1}{c|}{SIGIR'2020}&DGCF                   &      0.0365     &    0.0164   & 0.0654         & 0.0534        & 0.1842         & 0.1561          & 0.2640          & 0.2504          \\ \hline
      \multicolumn{1}{c|}{SIGIR'2021}&SGL       & 0.0454 & 0.0209 &   0.0690 & 0.0570 &   0.1798 & 0.1524 &  0.2528 & 0.2401 \\ 
      \multicolumn{1}{c|}{CIKM'2021}&SimpleX       & - & - &   0.0701 & 0.0575 &   0.1872 & 0.1557 &  0.2802 & {0.2670} \\ 
      \multicolumn{1}{c|}{CIKM'2021}&UltraGCN               & 0.0421 & 0.0195 & 0.0683 & 0.0561 &0.1862 & 0.1580 & 0.2787 & 0.2642 \\
      \multicolumn{1}{c|}{SIGIR'2022}&SimSGL    &   0.0486 & 0.0225 &   0.0721 & 0.0601 &   0.1834 & 0.1550 &  0.2689 & 0.2520 \\ 
    
      \multicolumn{1}{c|}{WWW'2022}&NCL  &  0.0426 & 0.0195 &   0.0664 & 0.0552 &   0.1760 & 0.1506 &  0.2623 & 0.2496 \\ 
      \hline
      \multirow{5}{*}{\shortstack{{MF}\\UAI'2009}}& +  BPR                 &       0.0268    &   0.0115        & 0.0549          & 0.0445          & 0.1616          & 0.1366          & 0.2153          & 0.2175          \\
      \multicolumn{1}{c|}{} &+ BCE                 &       0.0276 & 0.0120 & 0.0513 & 0.0415 & 0.1492 & 0.1251 & 0.2329 & 0.2244 \\
      \multicolumn{1}{c|}{}&+ MSE                 &       0.0321 & 0.0135 & 0.0415 & 0.0337 & 0.1160 & 0.1005 & 0.2216 & 0.2023 \\
      \multicolumn{1}{c|}{}&+ SL             &  \cellcolor{lightblue}0.0446  &    \cellcolor{lightblue}0.0195 &  \cellcolor{lightblue}0.0718 &  \cellcolor{lightblue}0.0585 &  \cellcolor{lightblue}0.1760 &  \cellcolor{lightblue}0.1399  &  \cellcolor{lightblue}0.2786 & \cellcolor{lightblue} 0.2633        \\ 
      \multicolumn{1}{c|}{}& + BSL             &  \cellcolor{lightblue}0.0476 & \cellcolor{lightblue} 0.0217  &  \cellcolor{lightblue}0.0725&  \cellcolor{lightblue}{0.0599} &  \cellcolor{lightblue}\underline{0.1882} &  \cellcolor{lightblue}\underline{0.1584} &  \cellcolor{lightblue}\underline{0.2804}&  \cellcolor{lightblue}\textbf{0.2681}   \\ 
    
      \hline
      \multirow{5}{*}{\shortstack{NGCF\\SIGIR'2019}}& + BPR               &      0.0235 & 0.0076 & 0.0579 & 0.0477 & 0.1570 & 0.1327 & 0.2513 & 0.2511 \\ 
      & + BCE                 &      0.0266 & 0.0107 & 0.0498 & 0.0401 & 0.1410 & 0.1169 & 0.2421 & 0.2292 \\ 
      & + MSE                 &      0.0234 & 0.0097 & 0.0391 & 0.0319 & 0.0837 & 0.0720 & 0.2376 & 0.2135 \\
      &+ SL            & \cellcolor{lightblue}0.0468 &  \cellcolor{lightblue}0.0218 & \cellcolor{lightblue}0.0711  & \cellcolor{lightblue}0.0589 & \cellcolor{lightblue}0.1674 & \cellcolor{lightblue}0.1399 & \cellcolor{lightblue}0.2789 & \cellcolor{lightblue}0.2632       \\ 
    &+ BSL              &  \cellcolor{lightblue}0.0491   &  \cellcolor{lightblue}0.0232 &  \cellcolor{lightblue}{0.0732} &  \cellcolor{lightblue}{0.0610}&  \cellcolor{lightblue}0.1829 &  \cellcolor{lightblue}0.1513 &  \cellcolor{lightblue}\textbf{0.2808}&  \cellcolor{lightblue}\underline{0.2676}   \\ 
    
      \hline
      \multirow{5}{*}{\shortstack{LGN\\SIGIR'2020}}&+BPR               &   0.0399 & 0.0180 & 0.0649 & 0.0530 & 0.1830 & 0.1554 & 0.2576 & 0.2427 \\
      &+BCE                 &       0.0386 & 0.0177 & 0.0557 & 0.0457 & 0.1470 & 0.1271 & 0.2484 & 0.2412 \\
      &+MSE                 &       0.0336 & 0.0150 & 0.0463 & 0.0366 & 0.1347 & 0.1080 & 0.2488 & 0.2291 \\
      &+SL         &      \cellcolor{lightblue}\underline{0.0519}      &     \cellcolor{lightblue}\underline{0.0248}     &  \cellcolor{lightblue}\underline{0.0736}&  \cellcolor{lightblue}\underline{0.0612}&  \cellcolor{lightblue}0.1878         &  \cellcolor{lightblue}0.1577         &  \cellcolor{lightblue}0.2792     & \cellcolor{lightblue}0.2627  \\
      &+BSL              &    \cellcolor{lightblue}\textbf{0.0521}     &      \cellcolor{lightblue}\textbf{0.0249}     &  \cellcolor{lightblue}\textbf{0.0744}    &  \cellcolor{lightblue}\textbf{0.0617}         &  \cellcolor{lightblue}\textbf{0.1893}   &  \cellcolor{lightblue}\textbf{0.1591}    &  \cellcolor{lightblue}0.2799    &  \cellcolor{lightblue}0.2627        \\ 
    
     
    

      \bottomrule
      \end{tabular}
  }
  \label{overall_res}
  \end{table*}

\begin{table*}[h]
  \centering
  \caption{{Overall performance comparison. (+X) denotes the loss functions. The best result is bolded and the runner-up is underlined.}} 
  \resizebox{1.0\textwidth}{!}{
    \begin{tabular}{c |l|ll|ll|ll|ll|c}
      \toprule
      \multicolumn{2}{c|}{\multirow{2}{*}{Model}} & \multicolumn{2}{c|}{Amazon} & \multicolumn{2}{c|}{Yelp2018}     & \multicolumn{2}{c|}{Gowalla}      & \multicolumn{2}{c|}{Movielens-1M}  & \multirow{2}{*}{Avg.} \\ \cline{3-10} 
      \multicolumn{2}{c|}{}      & Recall@20       & NDCG@20         & Recall@20       & NDCG@20         & Recall@20           & NDCG@20         & Recall@20       & NDCG@20    &  \\ \midrule
      
      \multirow{3}{*}{\shortstack{SGL\\ (\citeauthor{wu2021self}, \citeyear{wu2021self})}}&  -      & 0.0454 & 0.0209 &   0.0690 & 0.0570 &   0.1798 & 0.1524 &  0.2528 & 0.2401 & \\ 
      & +SL       & $ 0.0513^{\color{+}+ 13.14\%}$ & $ 0.0241^{\color{+}+ 15.45\%}$ & $ \underline{0.0736}^{\color{+}+ 6.68\%}$ & $\underline{ 0.0612}^{\color{+}+ 7.35\%}$ & $ 0.1872^{\color{+}+ 4.12\%}$ & $ 0.1568^{\color{+}+ 2.89\%}$ & $ 0.2766^{\color{+}+ 9.41\%}$ & $ 0.2579^{\color{+}+ 7.41\%}$ & \color{+}+ 8.31\%  \\
      & +BSL      & $ 0.0513^{\color{+}+ 13.16\%}$ & $ 0.0242^{\color{+}+ 15.60\%}$ & $ \textbf{0.0741}^{\color{+}+ 7.35\%}$ & $ \textbf{0.0615}^{\color{+}+ 7.86\%}$ & $ \textbf{0.1890}^{\color{+}+ 5.12\%}$ & $ \textbf{0.1592}^{\color{+}+ 4.46\%}$ & $ \underline{0.2787}^{\color{+}+ 10.25\%}$ & $ \underline{0.2621}^{\color{+}+ 9.16\%}$ & \color{+}+ 9.12\%  \\
      \hline
      \multirow{3}{*}{\shortstack{SimSGL\\ (\citeauthor{yu2022graph}, \citeyear{yu2022graph})}}& -    &   0.0486 & 0.0225 &   0.0721 & 0.0601 &   0.1834 & 0.1550 &  0.2689 & 0.2520 & \\ 
      & +SL       & $ 0.0515^{\color{+}+ 5.84\%}$ & $ 0.0239^{\color{+}+ 6.35\%}$ & $ 0.0733^{\color{+}+ 0.89\%}$ & $ 0.0608^{\color{+}+ 1.64\%}$ & $ 0.1835^{\color{+}+ 0.05\%}$ & $ 0.1552^{\color{+}+ 0.13\%}$ & $ 0.2752^{\color{+}+ 2.34\%}$ & $ 0.2565^{\color{+}+ 1.79\%}$ & \color{+}+ 2.38\%  \\ 
      & +BSL      & $ 0.0514^{\color{+}+ 5.78\%}$ & $ 0.0239^{\color{+}+ 6.35\%}$ & $ 0.0736^{\color{+}+ 1.32\%}$ & $ 0.0611^{\color{+}+ 2.17\%}$ & $ 0.1834^{\color{+}+ 0.00\%}$ & $ 0.1558^{\color{+}+ 0.52\%}$ & $ \textbf{0.2796}^{\color{+}+ 3.98\%}$ & $ \textbf{0.2676}^{\color{+}+ 6.19\%}$ & \color{+}+ 3.29\%  \\
      \hline
      \multirow{3}{*}{\shortstack{LightGCL\\ (\citeauthor{caisimple}, \citeyear{caisimple})}}& -    &  0.0489 & 0.0224 &   0.0697 & 0.0573 &   0.1867 & 0.1562 &  0.2316 & 0.2287 & \\
      & +SL       & $ \underline{0.0530}^{\color{+}+ 8.49\%}$ & $ \underline{0.0246}^{\color{+}+ 9.86\%}$ & $ 0.0698^{\color{+}+ 0.11\%}$ & $ 0.0581^{\color{+}+ 1.38\%}$ & $ 0.1805^{\color{-} -1.58\%}$ & $ 0.1539^{\color{+}+ 0.26\%}$ & $ 0.2742^{\color{+}+ 18.39\%}$ & $ 0.2562^{\color{+}+ 12.02\%}$ & \color{+}+ 6.12\%  \\
      & +BSL      & $ \textbf{0.0533}^{\color{+}+ 8.94\%}$ & $ \textbf{0.0247}^{\color{+}+ 10.22\%}$ & $ 0.0714^{\color{+}+ 2.39\%}$ & $ 0.0594^{\color{+}+ 3.70\%}$ & $ 0.1837^{\color{+}+ 0.16\%}$ & $ 0.1553^{\color{+}+ 1.17\%}$ & $ 0.2767^{\color{+}+ 19.47\%}$ & $ 0.2597^{\color{+}+ 13.55\%}$ & \color{+}+ 7.45\%  \\
      
      \bottomrule
      \end{tabular}
  }
  \label{overall_res_appendix}
\end{table*}

\begin{figure*}[t]\centering
  \includegraphics[width=1.0\linewidth]{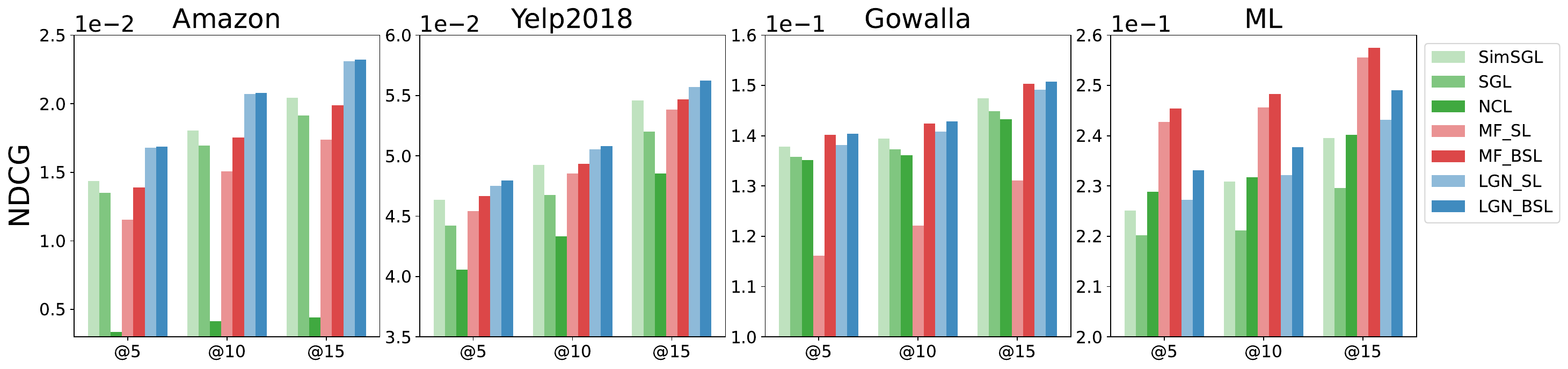}
  \caption{{Performance comparison \wrt different metric setting.}}
  \label{fig::ndcg_5_10_15}
\end{figure*}

\textbf{Baselines.} As the suggested loss function is model-agnostic, we select three fundamental and efficient models (MF\cite{MF}, NGCF\cite{NGCF}, and LightGCN\cite{he2020lightgcn}) as backbones and substitute their loss functions with our proposed losses. Additionally, we implement other loss functions for comparison purposes. Moreover, to further verify the efficacy of our proposal, we compare it with various models from three different categories:

(1) \textit{MF-based method}: ENMF\cite{ENMF} leverages a simple Neural Matrix Factorization architecture without sampling.

(2) \textit{metric learning-based method}: CML\cite{CML}, which combines metric learning and collaborative filtering, has the advantage of learning user-user and item-item similarity.

(3) \textit{GCN-based methods}:
\begin{itemize}
  \item NGCF\cite{NGCF} was the pioneering work that integrated the bipartite graph structure (GCN) into the embedding process.
  \item NIA-GCN\cite{NIA-GCN} explicitly considered the relational information in the neighborhood and employed a novel neighbor-aware graph attention layer.
  \item LR-GCCF\cite{LR-GCCF} revisited GCN based CF models from non-linear activations and proposed a residual network structure to alleviate the oversmoothing problem.
  \item LightGCN\cite{he2020lightgcn} introduced a novel GCN structure that is more concise and suitable for recommendation systems. It is also one of the most commonly used baselines in the field of recommender systems.
  \item SGL\cite{wu2021self} explored self-supervised learning on the user-item graph and proposed a new learning paradigm based on LightGCN.
  \item Ultra-GCN\cite{mao2021ultragcn} proposed an ultra-simplified formulation of GCNs, which skips infinite layers of message passing for efficient recommendation.
  \item SimpleX\cite{mao2021simplex} demonstrated the significance of the choice of loss function and incorporated the cosine contrastive loss into a simple unified CF model.
  \item SimSGL\cite{yu2022graph} revealed that graph augmentations play a trivial role and instead added uniform noises to the embedding space to create contrastive views.
  \item NCL\cite{lin2022improving} focused on the influence of data sparsity in real scenarios and proposed a novel contrastive learning approach that incorporates potential neighbors into contrastive pairs.
  \item DGCF\cite{DGCF} devised a new model to disentangle the finer granularity of user intents, offering the advantages of disentangling user intents and interpretability of representations.
\end{itemize}

  \textbf{Parameter Settings.} To ensure a fair comparison, we set the embedding size to 64 for all compared models, and the initialization is unified using Xavier \cite{glorot2010understanding}. We perform a grid search to determine the optimal parameter settings for each model. Specifically, we tune the learning rate among $\{1e^{-3}, 5e^{-3}, 1e^{-4}\}$, and search for the coefficient of the $L_2$ regularization term in $\{1e^{-9},1e^{-8},...,1e^{-1}\}$. Regarding the backbone of NGCF and LightGCN, we tune the number of layers among \{1,2,3\}, with or without using dropout to prevent over-fitting. With regards to SL and BSL, we search for temperatures with an interval of $0.10$ in the range [0.05, 1.0]. We also vary the number of negative samplings among {200, 400, 800, 1500}. For the methods under comparison, we meticulously adhere to the hyperparameter settings delineated in the original paper, while conducting a comprehensive grid search to identify the optimal configuration in our experiments. Please refer to the Appendix for more details.

  \subsection{Performance Comparison (RQ1)}
  In this subsection, we begin to analyze the superiority of our loss, as compared with other baselines.
  \begin{itemize}[leftmargin=*]
    \item As demonstrated in Table \ref{overall_res}, SL and BSL consistently outperform other loss functions by a significant margin on all four datasets and three backbone models. Moreover, even when applied to basic models such as MF and LightGCN, SL and BSL can outperform state-of-the-art methods. We attribute this to the efficacy of SL and BSL in mitigating negative distribution shifts. 
    \item Our proposed BSL consistently outperforms SL in all conditions, empirically verifying the necessity of denoising over positive interactions. By upgrading the positive component of SL to an advanced Log-Expectation-Exp structure, BSL exhibits better robustness against positive noise.
    \item Notably, due to the variations among datasets and baseline models, the improvements may exhibit slight disparities. For instance, in the case of Gowalla, while MF-SL performs comparably with MF-BPR, MF-BSL improves the performance by 16.0\% and 16.5\%, in terms of Recall@20 and NDCG@20 respectively. We suspect this phenomenon is due to the presence of more positive noise in the Gowalla dataset, which hinders the effectiveness of SL.
    \item { We incorporated additional experiments on more SOTA models including  SGL (SIGIR'21) \cite{wu2021self}, SimSGL (SIGIR'22) \cite{yu2022graph}  and LightGCL (ICLR'23) \cite{caisimple}). The results are presented in Table \ref{overall_res_appendix}. Here We applied both SL and BSL losses to those SOTA methods and observed that both losses enhance model performance, with BSL yielding more significant improvements. These results are in alignment with our analyses and demonstrate the effectiveness of the proposed loss. }
    \item {As shown in Figure \ref{fig::ndcg_5_10_15}, our analysis confirms that, across the metrics of NDCG@5, NDCG@10, and NDCG@15, the integration of BSL consistently enhances model performance. This enhancement is substantial enough to allow basic models (\eg MF, lightGCN) to surpass SOTA models (\eg SimSGL, NCL). }
\end{itemize}

\subsection{Performance with Noisy Negative Data (RQ2)}
In this section, we verify the resilience of SL and BSL against noise on negative data. We conduct two sets of experiments: 1) Fixing the number of negative instances and allowing the negative sampler to draw a certain ratio of positive data. 2) Increasing the number of uniformly sampled instances using the conventional method. Strategy 1 may lead to more false negatives, while strategy 2 increases the quantity of noisy negative instances. We perform these experiments on MF and observe similar results on other backbone models.

\subsubsection{Impact of noisy ratio in negative samples} 
\begin{itemize}[leftmargin=*] 
  \item Figure \ref{fig::pos_ratio} shows that noisy data negatively affects loss function performance. However, SL and BSL can handle noise in negative sampling by adjusting the parameter $\tau$, leading to better performance compared to other loss functions. Moreover, as the proportion of noisy negatives increases, the optimal $\tau$ also increases, aligning with the analysis in Section \ref{Empirical}.
  \item Furthermore, an interesting finding is that increasing the noise ratio on Yelp2018 can unexpectedly boost the performance of MSE and BCE. This anomaly arises from the inherent instability and varying performance of point-wise loss functions across datasets and scenarios. However, SL and BSL consistently outperform other loss functions and demonstrate greater stability.

\begin{figure}[t]\centering
    \includegraphics[width=1.0\linewidth]{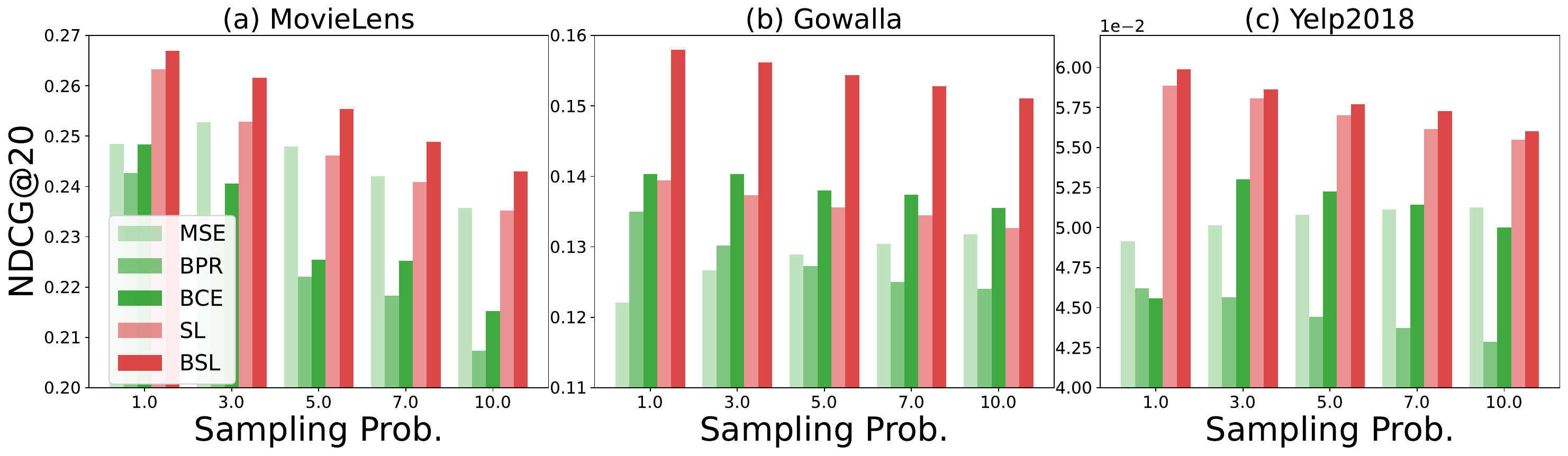}
    \caption{Performance comparison \wrt the ratio of false negative samples. A grid search is conducted to confirm the optimal parameter setting for each model.}
    \label{fig::pos_ratio}
\end{figure}

\subsubsection{Impact of sampling number}
\begin{itemize}[leftmargin=*]
  \item Firstly, as depicted in Figure \ref{fig::num_neg}, traditional loss functions such as BCE, BPR, and MSE can be negatively impacted by the number of negative samples. Specifically, on the MovieLens dataset, MSE continues to decline while BCE experiences fluctuations with the addition of negative samples. We attribute this to the fact that MovieLens is a relatively small and dense dataset, with only 3k items. Excess sampling numbers (greater than 512) inevitably increase the number of false negatives. However, in terms of distributionally robust optimization, SL and BSL exhibit extremely stable performance.
  \item Secondly, we have observed that BSL consistently outperforms SL, which validates the importance of leveraging the "worst-case" optimization over positive samples. When examining the Gowalla dataset, although SL displays a rising and gradually stabilizing trend, it is unable to achieve superior performance against other baseline models. Instead, BSL exhibits the same trend as SL but effectively addresses the issue of false positive samples on Gowalla, enabling it to demonstrate superiority. As a result, we argue that our proposed simple yet effective BSL is crucial for recommendation systems.
\end{itemize}

\begin{figure}[t]\centering
  \includegraphics[width=1.0\linewidth]{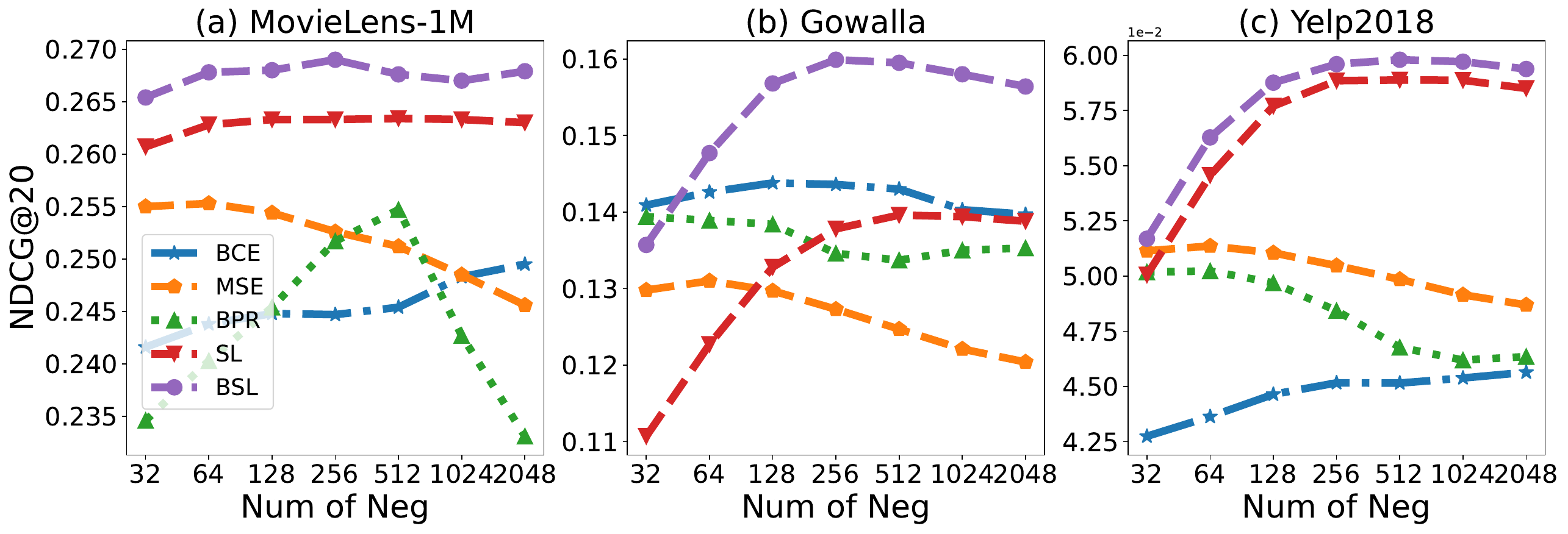}
  \caption{Performance comparison \wrt the number of negative samples. A grid search is conducted to confirm the optimal parameter setting for each model.}
  \label{fig::num_neg}
  \vspace{-.8em}
\end{figure}

\end{itemize}

\begin{table*}[t]
  \centering
  \caption{Overall performance comparison under different noisy ratio in positive data.}
  \resizebox{1.0\textwidth}{!}{
    \begin{tabular}{cl|cc|cc|cc|cc}
      \toprule
      \multirow{2}{*}{ratio}&\multirow{2}{*}{Model} & \multicolumn{2}{c|}{Amazon} & \multicolumn{2}{c|}{Yelp2018}     & \multicolumn{2}{c|}{Gowalla}      & \multicolumn{2}{c}{Movielens-1M}  \\ \cline{3-10} 
                            & & Recall@20       & NDCG@20         & Recall@20       & NDCG@20         & Recall@20           & NDCG@20         & Recall@20       & NDCG@20         \\ \midrule
      \multirow{3}{*}{10\%} & MF-SL &  0.0465 & 0.0203 & 0.0704 & 0.0576 &  0.1719 & 0.1356 &  0.2755 & 0.2600  \\
      & {MF-BSL} &	\textbf{0.0482} &\textbf{0.0217} & \textbf{0.0711} & \textbf{0.0589} & \textbf{0.1844} & \textbf{0.1566} &\textbf{0.2796} &  \textbf{0.2659} \\
      & {\%Improv.} &	\color{+}+3.66\% & \color{+}+6.90\% & \color{+}+1.00\% & \color{+}+2.26\% & \color{+}+7.27\% & \color{+}+15.49\% & \color{+}+1.49\% &  \color{+}+2.27\% \\
      \hline
      \multirow{3}{*}{20\%} & MF-SL & 0.0442 & 0.0191	& 0.0686 & 0.0563 &  0.1667 & 0.1315 & 0.2716 & 0.2554  \\
      & MF-BSL & \textbf{0.0465} & \textbf{0.0209}	& \textbf{0.0700 }& \textbf{0.0577} &  \textbf{0.1815} & \textbf{0.1542} & \textbf{0.2772 }& \textbf{0.2635 } \\
      & {\%Improv.} &	\color{+}+5.20\% & \color{+}+9.42\% & \color{+}+2.04\% & \color{+}+2.49\% & \color{+}+8.88\% & \color{+}+17.26\% & \color{+}+2.06\% & \color{+}+3.17\% \\
      \hline
      \multirow{3}{*}{30\%} & MF-SL & 0.0421 & 0.0182	& 0.0675 & 0.0551 & 0.1611 & 0.1266 & 0.2681 & 0.2523  \\
      & MF-BSL &\textbf{0.0446} &  \textbf{0.0199}	 & \textbf{0.0687} &\textbf{0.0570} & \textbf{0.1786} & \textbf{0.1517 }& \textbf{0.2742} & \textbf{0.2622}  \\
      & {\%Improv.} &	\color{+}+5.94\% & \color{+}+9.34\% & \color{+}+1.78\% & \color{+}+3.45\% & \color{+}+10.86\% & \color{+}+19.83\% & \color{+}+2.28\% &  \color{+}+3.51\% \\
      \hline
      \multirow{3}{*}{40\%} & MF-SL & 0.0403 & 0.0172 & 0.0657 & 0.0536 & 0.1571 & 0.1228 & 0.2645 & 0.2475  \\
      & MF-BSL & \textbf{0.0428} & \textbf{0.0192}	& \textbf{0.0681} & \textbf{0.0560} & \textbf{0.1758 } & \textbf{0.1495} &  \textbf{0.2717} & \textbf{0.2578}  \\
      & {\%Improv.} &	\color{+}+6.20\% & \color{+}+11.63\% & \color{+}+3.65\% & \color{+}+4.48\% & \color{+}+11.90\% & \color{+}+21.74\% & \color{+}+2.72\% &  \color{+}+4.16\% \\
      \bottomrule
      \end{tabular}
  }
  \label{tab:pos_noise}
\end{table*}

  \subsection{Performance with Noisy Positive data (RQ3)}
  In this section, we aim to validate the resilience of BSL to positive noise. To this end, we contaminate the positive instances by introducing a certain proportion of randomly sampled negative items, ranging from 10\% to 40\%, while keeping the test set unchanged. Additionally, we utilize t-SNE to visualize the item embeddings and analyze the effect of false positive data. The MF backbone is employed, and our observations are as follows:

  \subsubsection{Performance Comparison}
  \begin{itemize}[leftmargin=*]
    \item Undoubtedly, the introduction of noise into the positive data can have a detrimental impact on model performance. However, we contend that BSL possesses a significant ability to withstand such noise. As demonstrated in Table \ref{tab:pos_noise}, even when the ratio of noise training data reaches 40\%, the decrease in performance is not excessively large, ranging from 5\% to 15\%. We attribute this to two factors: 1) The Log-Expectation-Exp structure in the positive loss enables the model to optimize under various distributions with perturbations, resulting in better robustness to positive distribution shifts. 2) The separate temperatures allow the model to adapt to different levels of positive and negative noise.
    \item The performance degradation of our proposed BSL is consistently lower than that of SL, which validates the effectiveness of denoising. For instance, when contaminated with 40\% noisy interactions on Gowalla, SL experiences a significant drop in performance (from 0.1399 to 0.1228). In contrast, the robustness of BSL is enhanced by increasing the ratio $\frac{\tau_1}{\tau_2}$ and achieving a score of 0.1496 (from 0.1584 to 0.1496). Additionally, BSL demonstrates greater improvement over SL in scenarios with more noise. Specifically, with regards to Gowalla, the performance of BSL with 40\% additional noisy interactions still surpasses that of SL with a noise-free dataset. This outcome highlights the necessity of denoising positive samples.
  \end{itemize}

  \begin{figure}[t]\centering
    \includegraphics[width=1.0\linewidth]{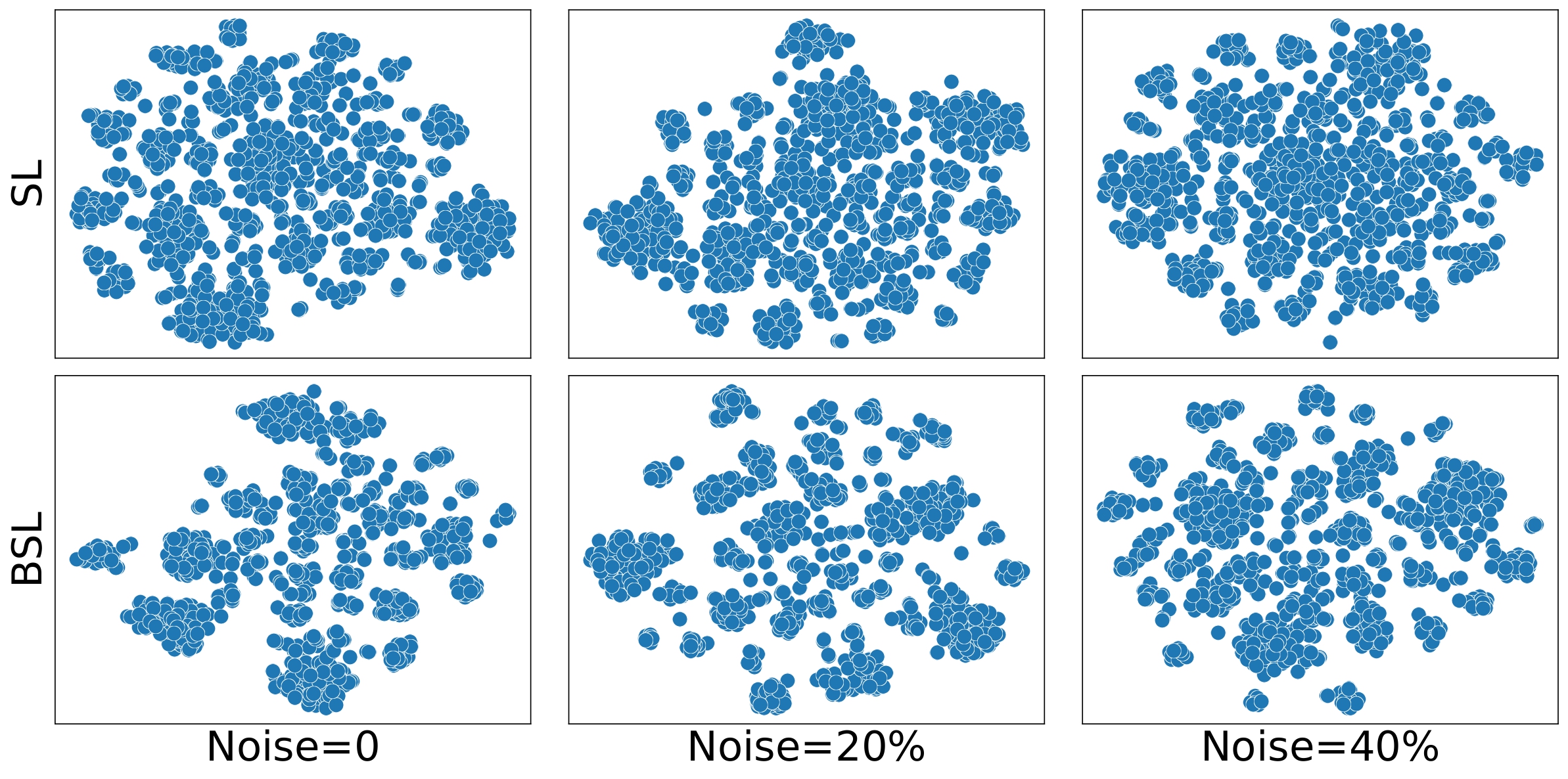}
    \caption{{t-SNE Visualization on Gowalla with additional noise data. BSL leads to better group-wise separation than SL in each noise case.}}
    \label{fig::tsne}
  \end{figure}

  \begin{figure}[t]\centering
    \includegraphics[width=1.0\linewidth]{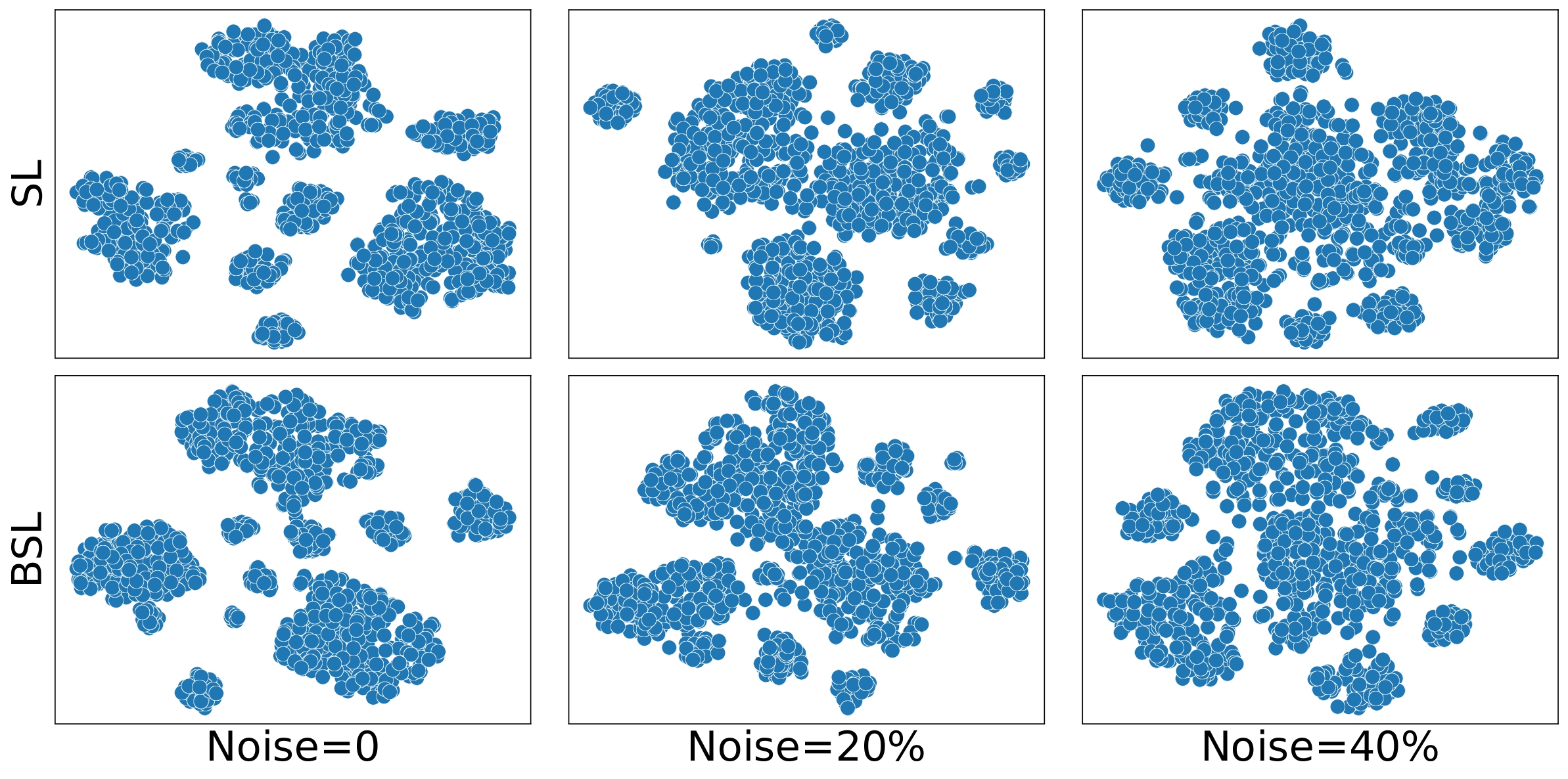}
    \caption{{t-SNE Visualization on Yelp2018 with additional noise data. BSL leads to better group-wise separation than SL in each noise case.}}
    \label{fig::yelp_tsne}
  \end{figure}

  \begin{figure*}[h]\centering
    \includegraphics[width=1.0\linewidth]{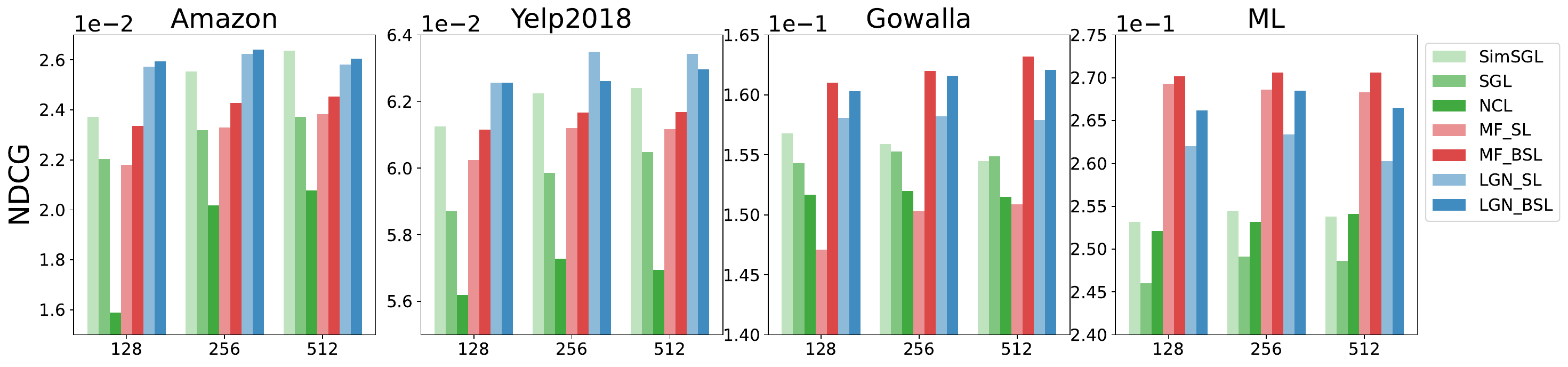}
    \caption{{ Performance comparison \wrt the different embedding dimension.} }
    \label{fig::ndcg_DIM}
    \vspace{-1.5em}
  \end{figure*}
  \begin{figure*}[t]\centering
    \includegraphics[width=1.0\linewidth]{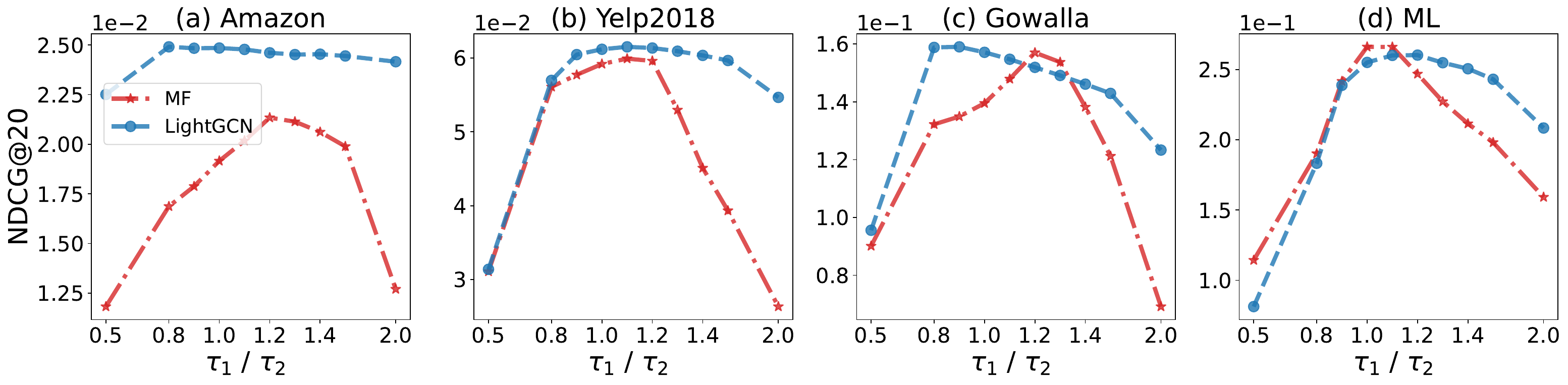}
    \caption{{Performance comparison \wrt the ratio of $\tau_1 / \tau_2$.}}
    \label{fig::sensity_appendix}
\vspace{-1.8em}
  \end{figure*}

  {
  \subsubsection{Visual Analysis}
  Due to spatial constraints, we conducted visual experiments on Gowalla and Yelp2018. As illustrated in Figure \ref{fig::tsne} and \ref{fig::yelp_tsne}, as the ratio of noise data increases, the item embeddings via SL begin to entangle. When the ratio reaches 40\%, the representation appears to resemble a uniform distribution, which confuses the relationships among items. In contrast, the embeddings via our proposed BSL still achieve decent group-wise separation. We believe that this is a superior and more meaningful embedding that reflects the characteristics of collaborative filtering, where similar items are grouped in the embedding space.
  }

{
\subsection{Sensitivity Study (RQ4)}
In this section, we conducted sensitivity analysis experiments on parameters in SL/BSL, thereby assisting everyone in providing some valuable insights for parameter configuration in the new dataset. Specifically, we initially conducted a validation of the model's dimensions. In the previous experiments, we fixed the representation dimension at 64. However, in this section, we attempted dimensions such as 128, 256, and 512. Additionally, we also examined the performance variations based on the $\tau_1 / \tau_2$. The details are as follows:
\subsubsection{Impact of embedding dimension}
Our analysis confirms that (\cf Figure \ref{fig::ndcg_DIM}), even with increased embedding sizes (\eg 512), the integration of SL/BSL consistently improves model performance. Notably, MF equipped with SL/BSL still achieves comparable performance with SOTA models under the setting of large embedding sizes. Additionally, despite the marginal performance gains of SL/BSL as the dimensions increase, it showcases its robustness towards the model and feature dimensions. In particular, the high accuracy at lower dimensions also makes it suitable for practical scenarios.
\subsubsection{Impact of temperature}
We conducted additional experiments to investigate the model's performance with varying ratios of $\tau_1/\tau_2$ by adjusting the hyper-parameter $\tau_1$. The results are illustrated in Figure \ref{fig::sensity_appendix}. When the value of $\tau_1$ is excessively large (\eg $\tau_1/\tau_2=2$), which implies an overly small robust radius according to our Corollary III.1, the model has poor robustness against positive noise. With $\tau_1$ decreasing, leading to increased robustness radius, both the model's robustness and performance boost. However, a further reduction of $\tau_1$ can be detrimental to performance, since an excessively large robust radius might amplify the risk of encountering an implausible worst-case distribution in DRO. 
}

\section{Related Work}
\label{sec_6}
In this section, we review the most related work from the following perspectives.
\subsection{Recommendation System}
The primary goal of the recommendation system is to predict potential interactions. Recent research has primarily focused on developing advanced model architectures, ranging from matrix factorization (MF-based) methods \cite{hu2008collaborative}, VAE-based methods \cite{liang2018variational, ma2019learning, steck2019embarrassingly}, to GNN-based methods \cite{he2020lightgcn, ying2018graph, wang2019neural}. For instance, MF-based models project the ID of users (or items) into embedding vectors to reconstruct historical interactions, while GNN-based models aggregate neighborhood information to enhance user/item representations.
For further details, readers can refer to the survey \cite{wu2022survey}.

\subsection{Robustness in Recommender System}

There has been increasing attention to enhancing the robustness of recommendation systems. In the early stages, Michael et al. \cite{o2004efficient} improved robustness by incorporating neighbor selection into MF via an adversarial network \cite{he2018adversarial}. To combine with user-item graph, Chen et al. \cite{chen2021structured} and Feng et al. \cite{feng2021robust} designed several denoising strategies or modules, such as attention mechanism \cite{chen2021structured} or stochastic binary masks \cite{feng2021robust}. More recently, Wang et al. \cite{wang2021denoising} adaptively pruned noisy interactions during training by monitoring the training process. Furthermore, Lin et al. \cite{lin2023autodenoise} introduced AutoDenoise, serving as an agent in DRL to dynamically select noise-free and predictive data instances. Ge et al. \cite{ge2023automated} learned to automatically assign the most appropriate weights to each implicit data sample.
In general, the existing methods incur additional time and computational cost to achieve robustness, or rely on access to prior knowledge to guide the denoising process. 
In contrast, our work tackles the challenge of denoising from the perspective of innovating the loss function architecture.

\subsection{Loss functions in Recommender System}
The loss function is critical in recommendation systems. Various loss functions have been proposed to address different challenges in collaborative filtering (CF). \cite{mao2021simplex} proposed the cosine contrastive loss inspired by contrastive learning. To combat the negative impact of popularity bias in CF models, \cite{zhang2022incorporating} integrated bias-aware margins into contrastive loss. Auto-ML technique has been employed by \cite{zhao2021autoloss} to search for the optimal loss design. They developed a novel controller network that can dynamically adjust the loss probabilities in a differentiable manner. Although \cite{mao2021simplex} and \cite{zhang2022incorporating} can be seen as improvements to SL, they lack theoretical analysis of their advantages or suffer from the efficiency issues caused by additional optimization.

To the best of our knowledge, there is limited work studying the theoretical properties of SL in RS. For example, \cite{bruch2019analysis} related SL to NDCG and proved SL is bounded by NDCG under multiple relaxations. However, their analysis struggles to explain the robustness, role of $\tau$, and fairer recommendation. \cite{wu2022effectiveness} attributed the fairness arising from SL to the popularity-aware sampling.
Nevertheless, we observe that SL maintains fairness and strong performance even with uniform sampling. 

By examining the Softmax loss from other areas, we identified a recent study \cite{wu2023ADNCE} that analyzes contrastive loss from DRO perspective, which is closely related to our research. The distinctions between our work with \cite{wu2023ADNCE} are threefold: 1) Our theoretical and empirical analyses of SL are specifically tailored to the recommendation domain; 2) We demonstrate both the robustness and fairness that are inherent in the SL when applied to recommendation systems; 3) We have developed a novel loss function BSL, which is specifically designed for recommender systems.


\subsection{Temperature in Softmax Loss}
Limited work has analyzed temperature in SL, primarily from the community of contrastive learning (CL). 
For instance, Wang et al. \cite{wang2021understanding, wu2022effectiveness} attributed the success of CL to its inherent hardness-awareness, while Zhang et al. \cite{zhang2022does} focused on the decorrelation ability. Additionally, Zhang et al. \cite{zhang2022dual} proposed dual temperature in the vector and scalar components, which is different from our Log-Expectation-Exp structure. Furthermore, \cite{kukleva2023temperature} demonstrated that a simple cosine schedule for achieving dynamic $\tau$ can significantly improve long-tail performance.
However, these studies regard temperature as a heuristic design. In contrast, our work provides novel theoretical insights into the importance of temperature for robustness in SL.

\section{Conclusion and Future Work}
\label{sec_7}
In this work, we focus on explaining the essence of softmax loss in recommendation system. 
We prove that: 1) Optimizing SL is equivalent to performing DRO on the negative parts in vanilla pointwise loss, giving it the ability to alleviate negative sample noise; {2) Comparing with other loss functions, SL implicitly penalizes the prediction variance, yielding fairer results;} Based on the above analysis, we propose BSL that mirrors the advantageous structure of the negative loss part to the positive part such that  BSL enjoys the merit of both positive and negative denoising. 
In our experiments, we evaluate BSL on numerous backbones to validate the effectiveness of our model and demonstrate its superiority.


This work represents the first step towards bridging the gap between SL and its superiority and robustness.
We believe that the proposed DRO method, which is generally used to deal with noisy data in RS, holds promising potential for extending to other fields of denoising. Additionally, exploring the role of different loss functions in fairness from our perspective is a potential direction for future research.

\section*{Acknowledgment}
This research is supported by the National Natural Science Foundation of China (9227010114, 62302321), the University Synergy Innovation Program of Anhui Province (GXXT-2022-040) and supported by the advanced computing resources provided by the Supercomputing Center of Hangzhou City University.

\appendix
\section{Detailed implementation}
\label{se:app}
In this section, we present the implementation details of our BSL in different backbones. During training, we use cosine similarity and BSL to calculate the training loss. During testing, we use cosine similarity for prediction score in MF and inner product in GCN-based models (NGCF, LightGCN). In "Negative Sampling," we use efficient uniform sampling techniques, while in "In Batch," we treat other positive items in a batch as negative samples.
\begin{table}[h]
  \caption{The difference between MF, NGCF and LightGCN.}
  \label{tab:proof_}
  \centering
  \begin{tabular}{c|ccc}
      \toprule
      Backbone & Training & Testing & Sampling Method\\
      \hline
       MF & cosine similarity & cosine similarity & Negative Sampling \\
       NGCF & cosine similarity & inner product  & In Batch\\
       LightGCN & cosine similarity & inner product & In Batch\\
      \bottomrule
  \end{tabular}
\end{table}
\vspace{-1.5em}

\begin{algorithm}[h]
  \caption{BSL Pseudocode (Negative Sampling)}
  \label{alg:code}
  \definecolor{codeblue}{rgb}{0.25,0.5,0.5}
  \definecolor{codekw}{rgb}{0.85, 0.18, 0.50}
  \lstset{
    backgroundcolor=\color{white},
    basicstyle=\fontsize{7.5pt}{7.5pt}\ttfamily\selectfont,
    columns=fullflexible,
    breaklines=true,
    captionpos=b,
    commentstyle=\fontsize{7.5pt}{7.5pt}\color{codeblue},
    keywordstyle=\fontsize{7.5pt}{7.5pt}\color{codekw},
  }
  \begin{lstlisting}[language=python, mathescape]
  # f: user and item embedding table
  # t1: temperature scaling for positive samples
  # t2: temperature scaling for negative samples
  for (u,i,j) in loader: 
  # load a minibatch (u,i) with m negative samples
  # dimension u: [B]; i:[B]; j:[B,m]
    emb_u, emb_i, emb_j = f(u), f(i), f(j)
    # dimension u: [B, D]; i:[B, D]; j:[B, m, D]
    L = loss_fn(emb_u, emb_i, emb_j)
    L.backward() # back-propagate
    update(f)    # Adam update

  def loss_fn(emb_u, emb_i, emb_j):
    emb_u = normalize(emb_u, dim=1) # l2-normalize
    emb_i = normalize(emb_u, dim=1) # l2-normalize
    emb_j = normalize(emb_u, dim=1) # l2-normalize
    pos_score=(emb_u*emb_i).sum(dim=1) # dimension: [B]
    neg_score=(emb_u.unsqueeze(1)*emb_j).sum(dim=2) 
    # dimension: [B, m]
    L(BSL) = -((pos_score/t1).exp() / (neg_score/t2).exp().sum().pow(t1/t2) ).log()
    return L(BSL)
  \end{lstlisting}
  \end{algorithm}
  
\vspace{-1.5em}

  \begin{algorithm}[h]
    \caption{BSL Pseudocode (In Batch), PyTorch-like}
    \label{alg:code2}
    \definecolor{codeblue}{rgb}{0.25,0.5,0.5}
    \definecolor{codekw}{rgb}{0.85, 0.18, 0.50}
    \lstset{
      backgroundcolor=\color{white},
      basicstyle=\fontsize{7.5pt}{7.5pt}\ttfamily\selectfont,
      columns=fullflexible,
      breaklines=true,
      captionpos=b,
      commentstyle=\fontsize{7.5pt}{7.5pt}\color{codeblue},
      keywordstyle=\fontsize{7.5pt}{7.5pt}\color{codekw},
    }
    \begin{lstlisting}[language=python, mathescape]
    # f: user and item embedding table
    # t1: temperature scaling for positive samples
    # t2: temperature scaling for negative samples
    for (u,i) in loader: 
    # load a minibatch (u,i)
    # dimension u: [B]; i:[B]
      emb_u, emb_i = f(u), f(i)
      # dimension u: [B, D]; i:[B, D]; j:[B, m, D]
      L = loss_fn(emb_u, emb_i)
      L.backward() # back-propagate
      update(f)    # Adam update
  
    def loss_fn(emb_u, emb_i):
      emb_u = normalize(emb_u, dim=1) # l2-normalize
      emb_i = normalize(emb_u, dim=1) # l2-normalize

      s=emb_u.matmul(emb_i.t()) # pairwise Similarity

      B = emb_u.shape(0)
      mask_index = mask_index(B)
      pos_score=s[~mask_index] # dimension: [B]
      neg_score=s[mask_index].view(B, -1)

      L(BSL) = -((pos_score/t1).exp() / (neg_score/t2).exp().sum().pow(t1/t2) ).log()
      return L(BSL)
    
    def mask_index(batch_size):
      negative_mask = ones((batch_size, batch_size))
      for i in range(batch_size):
          negative_mask[i, i] = 0
    return negative_mask
    \end{lstlisting}
    \end{algorithm}

\newpage

\bibliographystyle{unsrtnat}
\bibliography{sample-base}
\clearpage

\noindent\hrulefill\ \textbf{Summary of Updates}\ \hrulefill

We have made substantial revisions to the paper based on the comments provided by the meta-reviewer and three reviewers. The major changes are as follows:
\begin{enumerate}
  \item We have considered more SOTA models including SGL (SIGIR'21) \cite{wu2021self}, SimSGL (SIGIR'22) \cite{yu2022graph}  and LightGCL (ICLR'23) \cite{caisimple} and conducted experiments on more metrics including NDCG@5, NDCG@10, NDCG@15. The results are presented in Table \ref{overall_res_appendix} and Figure \ref{fig::ndcg_5_10_15}.
  \item We have conducted additional sensitivity experiments to investigate the model's performance with varying ratios of $\tau_1/\tau_2$ and embedding dimension. The results are illustrated in Figure \ref{fig::sensity_appendix} and Figure \ref{fig::ndcg_DIM}.
  \item We have added some statements to emphasize the significance of fairness stemming from DRO. Furthermore, additional analyses and experiments have been conducted in Section \ref{sec_3} to demonstrate the fairness of SL.
  \item We have carefully checked the entire paper, polishing the language and figures, as well as typos, confusing expressions, and other minor issues in the revised version.
\end{enumerate}
We thank all reviewers for the careful reading and valuable feedback. We will address their concerns as follows.

\noindent\hrulefill\ \textbf{Response to Reviewer \#8}\ \hrulefill

We greatly appreciate your acknowledgement of our contributions and your insightful comments. In what follows, we provide responses to the concerns you have raised:

\Review{1}{The author should survey more SOTA models and evaluate the results are true.}

\Response{1}{Thank you for the insightful suggestion. We incorporated additional experiments on more SOTA models including  SGL (SIGIR'21) \cite{wu2021self}, SimSGL (SIGIR'22) \cite{yu2022graph}  and LightGCL (ICLR'23) \cite{caisimple}. The results are presented in Table \ref{overall_res_appendix}. Here we applied both SL and BSL losses to those SOTA methods and observed that both losses enhance model performance, with BSL yielding more significant improvements. These results are in alignment with our analyses and demonstrate the effectiveness of the proposed loss. }



\Review{2}{The proposed BSL is a slightly enhanced SL. The novelty part of this paper is somewhat not sufficient. Though the claim of SL/BSL is good, more justifications and recommendation models should be added.}

\Response{2}{
Thank you for raising these concerns. First and foremost, we wish to highlight the novelty of our work, which lies not only in proposing a novel loss function but also in conducting a comprehensive analysis of the SL loss's properties. This includes theoretically demonstrating the robustness and fairness of SL and elucidating the role of temperature. We contend that such analyses are crucial, as they provide insights that could enable researchers in this field to deepen their understanding of loss properties and potentially inspire further research along this line. The introduction of our BSL is, in fact, a direct outcome of these analyses. Although BSL is a simple revision of SL, it has been shown to yield significant enhancements in performance across a variety of backbone models and metrics.

Furthermore, we are very grateful for the reviewers' constructive suggestions and have accordingly incorporated additional state-of-the-art recommendation models, metrics, and visualization analysis into our experiments. For further details, the reviewer is directed to our Response 1, 3, 4.

}

\Review{3}{How about other metric settings? For example, NDCG@5, NDCG@10, NDCG@15.}

\Response{3}{Thank you for the valuable suggestion. We have included additional experiments with various metrics ( \cf Figure \ref{fig::ndcg_5_10_15}). Our analysis confirms that, across the metrics of NDCG@5, NDCG@10, and NDCG@15, the integration of BSL consistently enhances model performance. This enhancement is substantial enough to allow basic models (\eg MF, lightGCN) to surpass SOTA models (\eg SimSGL, NCL).  }

\Review{4}{Figure 8 shows the visualization of Gowalla by adding some noise data. How about other datasets ?}

\Response{4}{Thank you for the valuable suggestion. We have conducted visualization experiments on other three datasets and drawn similar conclusions as on Gowalla. However, given the constraints of paper length, we simply report the visualization results from two representative datasets, as illustrated in Figures \ref{fig::tsne} and \ref{fig::yelp_tsne}}.

\noindent\hrulefill\ \textbf{Response to Reviewer \#10}\ \hrulefill

We sincerely appreciate your recognition of our work and deeply regret any confusion caused by some presentation issues within our paper. Your detailed comments are highly valued. In the revised version, we have refined the paper in accordance with your feedback, rectifying any typographical errors or presentation issues. In the following, we provide responses to the questions you have raised:

\Review{1}{Do the findings still hold with a larger embedding size?}

\Response{1}{Thank you for the valuable suggestion. In the revised manuscript, we have incorporated further experiments with diverse embedding dimensions, as detailed in Figure \ref{fig::ndcg_DIM}. Our analysis confirms that, even with increased embedding sizes (\eg 512), the integration of SL/BSL consistently improves model performance. Notably, MF equipped with SL/BSL still achieves comparable performance with SOTA models under the setting of large embedding sizes.}

\Review{2}{How about the sensitivity of BSL \wrt the hyper-parameter $\tau_1$?}

\Response{2}{Thanks for the insightful suggestion. In the revised manuscript, we have conducted additional experiments to investigate the model's performance with varying ratios of $\tau_1/\tau_2$ by adjusting the hyper-parameter $\tau_1$. The results are illustrated in Figure \ref{fig::sensity_appendix}. When the value of $\tau_1$ is excessively large (\eg $\tau_1/\tau_2=2$), which implies an overly small robust radius according to our Corollary \ref{coro1}, the model has poor robustness against positive noise. As $\tau_1$ decreases (\ie increasing robustness radius), both the model's robustness and performance are boosted. However, a further reduction of $\tau_1$ can be detrimental to performance, since an excessively large robust radius might amplify the risk of encountering an implausible worst-case distribution in DRO. }
\Review{3}{ Why Table 1 does not show the results of SimpleX and NIA-GCN on the Amazon dataset ?}

\Response{3}{We apologize for the inconvenience, but we encountered difficulties in replicating the results of SimpleX and NIA-GCN on the Amazon dataset. On the one hand, their original papers do not disclose performance on the Amazon dataset. On the other hand, despite our diligent efforts to fine-tune a substantial number of hyperparameters, we were unable to obtain decent results of these two methods. }


\noindent\hrulefill\ \textbf{Response to Reviewer \#14}\ \hrulefill

We appreciate your recognition of our contribution to the connection between SL and DRO. We also express our gratitude for your insightful inquiries regarding BSL. Below, we present responses to your comments:

\Review{1}{The fairness proportion is also relatively small in the paper. Besides, there is still a minor lack of clarity on how the fairness theory (Equation \eqref{lam1_eq}) can be used to support empirical results (Figure \ref{fig::t_changes_performance}a).}

\Response{1}{Thank you for the constructive comments to enhance our paper. In the revised manuscript,  we have placed greater emphasis on the aspect of fairness within both the Contributions and Conclusions. Furthermore, we provide more analyses on the fairness of SL in Section \ref{sec_3}.  

\textbf{Meanwhile, the connection between Equation \eqref{lam1_eq} and Figure \ref{fig::t_changes_performance}a stems from the fundamental role of the negative sample variance penalty.} Recent research \cite{williamson2019fairness} has indicated that the variance penalty corresponds to fairness. From Equation \eqref{lam1_eq}, we can conclude that SL implicitly introduces a regularizer that penalizes the variance of the model predictions on the negative instances. It is important to note that in typical recommender systems, recommendation model is prone to give extensive higher scores on popular items than unpopular ones, incurring popularity bias \cite{chen2021autodebias}. SL could mitigate this effect to a certain degree, as it inherently reduces the variance of model predictions, which implies that the predictions are likely to be closer. Consequently, the discrepancy in model predictions between popular and unpopular items is diminished, leading to more fair recommendation results. Unpopular items get more opportunity in recommendation, and thus we observe better NDCG@20 for those unpopular groups (\eg groups 1-5).



To establish a clear contrast, we conducted ablation studies by removing the variance penalty term and comparing the performance of the standard SL loss with the revised loss. The results are presented in Figure \ref{fig::variance_w_wo}. As can be seen, the exclusion of the variance term significantly exacerbates the unfairness of recommendation --- \ie better performance on popular groups (\eg groups 8-10) while much worse performance on unpopular groups (\eg groups 1-5).

\begin{figure}[h]\centering
  \includegraphics[width=0.5\linewidth]{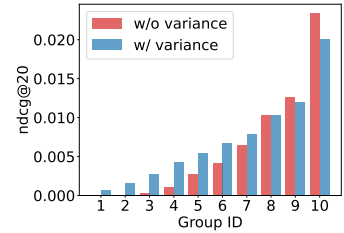}
  \caption{Performance comparison \wrt different item groups. The item groups with larger ID suggest these items have larger popularity.}
  \label{fig::variance_w_wo}
  \vspace{-1.8em}
\end{figure}
}
\Review{2}{1. P* in Figure \ref{fig::ppt_Performance} refers to true distribution but in paragraph it refers to worst case distribution. 2. The term ``potential distribution'' in Figure \ref{fig::ppt_Performance} does not appear anywhere else, does it  mean the worst distribution mentioned before?}

\Response{2}{Thank you for identifying these problems. In the revised manuscript, we polished Figure \ref{fig::ppt_Performance} and denoted the ideal distribution as $p^{ideal}$ and revised the term ``potential distribution'' to the ``worst-case distribution''. 
}



\Review{3}{Why SL performs much worse than BPR, MSE in Figure \ref{fig::t_changes_performance}a ?}

\Response{3}{ We apologize for the confusion arising from the previously unclear description. In fact, Figure \ref{fig::t_changes_performance}a demonstrate SL exhibits better fairness compared with other losses. Here we classify items into 10 groups according to item popularity and test recommendation performance for each item group \cite{zhang2021causal}. A higher Group ID indicates greater item popularity. As depicted in Figure \ref{fig::t_changes_performance}a, the SL loss significantly improves model performance for less popular item groups (groups 1-7), albeit at the cost of reduced performance for more popular item (groups 8-10). This result demonstrates SL achieves more fair recommendation, where the performance gap between popular items and unpopular items has been narrowed. Furthermore, SL achieves overall better performance (as shown in Table \ref{overall_res}) due to the improvements over the majority of groups. In the revised manuscript, we have provided more detailed explanations in Section \ref{sec_3} to facilitate a clearer understanding for readers.


}



\Review{4}{In figures \ref{fig::pos_ratio} and \ref{fig::num_neg}, SL keeps performing bad in Gowalla dataset. The author should explain more than just doubting there are positive noise.}

\Response{4}{We appreciate your insightful comments. We surmise
 that the observed phenomenon could be attributed to the characteristics of the Gowalla dataset, which may contain less noisy negative instances than other datasets. Consequently, the benefits of the SL method are less pronounced. Moreover, considering more complex function of SL than other losses, SL might be relatively more hard to optimize. This could explain the suboptimal performance of SL with a very small number of negative instances (\eg 32) in Figure \ref{fig::num_neg}. However, as the number of negative instances increases and the risk of sampling noisy data increases, SL can achieve comparable performance with the best compared losses (\eg SL performs similar to BCE with 2048 negative instances). 
}

\Review{5}{Can elaborate more on “the fundamental reasons for SL's effectiveness remain poorly explored” mentioned in Abstract.}

\Response{5}{ Thank you for the valuable suggestion. In the revised manuscript, we give better descriptions of recent work on SL in Abstract.}

\end{document}